\newcommand{\event}{\mathcal{E}}
\def \pnorm {B}
\def \alg {\text{APPO}}
\def \method {\text{ADPO}}
\definecolor{LightCyan}{rgb}{0.8, 0.9, 1}
\definecolor{LightCyan}{rgb}{0.8, 0.9, 1}
\newcommand*{\rom}[1]{\expandafter\@slowromancap\romannumeral #1@}
\title{\huge Reinforcement Learning from Human Feedback with Active Queries}
\author
{
        Kaixuan Ji\thanks{Equal Contribution}
    \thanks{Department of Computer Science, University of California, Los Angeles, CA 90095, USA; e-mail: {\tt kaixuanji@cs.ucla.edu}}
        ~~~and~~~
	Jiafan He$^*$\thanks{Department of Computer Science, University of California, Los Angeles, CA 90095, USA; e-mail: {\tt jiafanhe19@ucla.edu}}
	~~~and~~~
	Quanquan Gu\thanks{Department of Computer Science, University of California, Los Angeles, CA 90095, USA; e-mail: {\tt qgu@cs.ucla.edu}}
    % Authors
}
\begin{document}
% \nipsfinalcopy is no longer used
\date{}
\maketitle

\begin{abstract}

Aligning large language models (LLM) with human preference plays a key role in building modern generative models and can be achieved by reinforcement learning from human feedback (RLHF). Despite their superior performance, current RLHF approaches often require a large amount of human-labelled preference data, which is expensive to collect. In this paper, inspired by the success of active learning, we address this problem by proposing query-efficient RLHF methods. We first formalize the alignment problem as a contextual dueling bandit problem and design an active-query-based proximal policy optimization ($\alg$) algorithm with an $\tilde{O}(d^2/\Delta)$ instance-dependent regret bound and an $\tilde{O}(d^2/\Delta^2)$ query complexity, where $d$ is the dimension of feature space and $\Delta$ is the sub-optimality gap over all the contexts. We then propose $\method$, a practical version of our algorithm based on direct preference optimization (DPO) and apply it to fine-tuning LLMs. Our experiments show that $\method$, while only making about half of queries for human preference, matches the performance of the state-of-the-art DPO method.

\end{abstract}

\section{Introduction}

Recent breakthroughs in large language models (LLM) significantly enhance the performances across a wide range of tasks, including common sense reasoning, world knowledge, reading comprehension and math problem solving \citep{jiang2023mistral, touvron2023llama, vicuna2023, tunstall2023zephyr}. In addition to the prominent capabilities of traditional natural language tasks \citep{gao2023exploring, yuan2023zero, han2023information, wei2023zero}, they also demonstrate great potential in responding to human instructions \citep{ouyang2022training}. One key step towards building these models is aligning them with human preference, where reinforcement learning from human feedback (RLHF) \citep{casper2023open, ouyang2022training, ziegler2019fine, christiano2017deep, rafailov2024direct} is widely employed. 
The orthodox process of RLHF \citep{gao2023scaling,munos2023nash} is described as follows. At each time, the human user prompts the LLM with an instruction. Subsequently, the model generates several candidate responses and queries the users for their preferences. Then, a reward model is trained on this preference data to mimic human evaluation.  
The language models are then updated using reinforcement learning (RL) algorithms such as Proximal Policy Optimization (PPO) \citep{schulman2017proximal} to optimize responses that maximize the reward.
However, PPO requires an additional reward model and online sampling from LLMs, which is computational inefficient. Alternatively, Direct Preference Optimization (DPO) \citep{rafailov2024direct} directly treats the language models themselves as the reward models and optimize the LLMs on the offline datasets. While its objective is mathematically equivalent to its canonical counterpart, it eliminates the requirement of additional reward modeling and online sampling.

Despite the notable success of RLHF in aligning language models with human preferences, its practical implementation often necessitates significant amounts of human-labeled preference data. For instance, the fine-tuning process of \texttt{zephyr-7b-beta} through RLHF relies on the utilization of a sizable 62k UltraCat-binarized dataset \citep{ding2023enhancing}. The collection of such a substantial volume of human preference data is both costly and inefficient. Therefore, there exists a pressing need to develop query-efficient RLHF methods for effectively aligning large language models with human preferences. 

Following recent theoretical advancements in RLHF \citep{xiong2023gibbs, zhu2023principled, sekhari2024contextual}, we formulate the RLHF problem as a contextual dueling bandit problem \citep{yue2012k, wu2016double, saha2021optimal, saha2022efficient, saha2022versatile, wu2023borda, di2023variance}. In this setting, the learner proposes a pair of actions and receives noisy feedback regarding the preference between the dueling pair for each round. 
While numerous studies address regret minimization in dueling bandits, only a few works \citet{wang2023rlhf,zhan2023query,wu2023making,sekhari2024contextual} have considered query complexity during the learning process. However, their results either exhibit a linear dependency on the size of the action set $\cA$, limiting the practical applicability of their methods, or fail to provide instance-dependent regret, thereby missing the opportunity to exploit favorable large-suboptimal-gap structures in RLHF.\footnote{For instance, in the Ultrafeedback-binarized dataset, the minimal reward gap is 0.5, within a range of 0 to 10. This occurs because users typically cannot perceive subtle quality differences between responses.}

In this paper, we adopt the principles of active learning \citep{zhang2000value, hoi2006batch} to design a query-efficient algorithm, \textbf{A}ctive \textbf{P}roximal \textbf{P}olicy \textbf{O}ptimization ($\alg$) for linear contextual dueling bandits. In each round, $\alg$ employs the maximum likelihood estimator~\citep{di2023variance} to estimate the underlying parameter and constructs an optimistic estimator for the reward gap between different arms. Subsequently, $\alg$ selects the best arms and estimates the uncertainty associated with the potential feedback. To reduce the query complexity, $\alg$ selectively queries the dueling preference and updates the parameters only when the uncertainty of the observation exceeds a threshold. 

We further extend $\alg$ to direct preference optimization (DPO) \citep{rafailov2024direct} and introduce a novel query-efficient method, \textbf{A}ctive \textbf{D}irect \textbf{P}olicy \textbf{O}ptimization ($\method$). Following the methodology of $\alg$, $\method$ selectively queries human preference only for data where the model exhibits high uncertainty about the observation. For data where the model is less uncertain about,  we employ the pseudo label predicted by the model to fine-tune the model itself. Our contributions are summarized as follows.

\begin{itemize}[leftmargin=*]
\setlength\itemsep{0.5em}
    \item We propose an active-learning based algorithm $\alg$ for linear contextual dueling bandits. Theoretical analysis shows that our algorithm enjoys a constant instance-dependent regret $\tilde{O}(d^2/\Delta)$\footnote{We use notation $\tilde \cO(\cdot)$ to hide the log factor other than number of rounds $T$}, where $d$ is the dimension of the feature space, and $\Delta$ is the minimal sub-optimal gap. Meanwhile, our proposed algorithm only requires $\tilde{O}(d^2/\Delta^2)$ queries in total $T$ rounds. Compared with previous instance-dependent regret bound $\tilde{O}( A^2 \beta^2 d/\Delta )$ achieved by \citet{sekhari2024contextual}\footnote{In our work, we only focused on the regret of one selected action, which is slightly different from the two-arm regret in \citet{sekhari2024contextual}. See Section~\ref{sec:pre} for further discussion.}, where $A$ is the size of the action space, our regret bound is independent on the size of action space $A$, which is more favorable in practice.%\todoq{need more details}
    \item We propose an active-learning-based DPO method named $\method$. We apply our method to train \texttt{zephyr-7b-beta} on Ultrafeedback-binarized dataset \citep{ding2023enhancing} and \texttt{zephyr-7b-gemma} on dpo-mix-7k dataset. Our experiment shows that while $\method$ only make less than half numbers of queries, the model trained by $\method$ achieves a comparable or better performance than DPO on our selected benckmarks including MT-Bench~\citep{zheng2024judging} and AlpacaEval 2.0.
\end{itemize}

\paragraph{Notation } 
% \textcolor{red}{ 
We employ $[n]$ to denote the set $\{1,\dots, n\}$. In this work, we use lowercase letters to represent scalars, and denote vectors and matrices by lower and uppercase boldface letters respectively. Given a vector $\xb\in \RR^d$, we denote the vector's $L_2$-norm by $\|\xb\|_2$. We further define $\|\xb\|_{\bSigma}=\sqrt{\xb^\top\bSigma\xb}$ given a positive semidefinite matrix $\bSigma \in \RR^{d\times d}$. We use standard asymptotic notations $O(\cdot), \Omega(\cdot), \Theta(\cdot)$, and further use $\tilde O(\cdot)$ to hide logarithmic factors other than the number of rounds $T$. We use $\ind\{\cdot\}$ denote the indicator function.

\section{Related Work}

%\todoq{add it today}

% \subsection{Reinforcement Learning from Human Feedback}
\paragraph{Reinforcement Learning from Human Feedback } Learning from human preference data dates back to \citet{wirth2017survey, christiano2017deep} and is recently popularized by generative language models \citep{achiam2023gpt, touvron2023llama}. This procedure usually takes place after supervised finetuning (SFT).
The canonical procedure of aligning with human preference includes two stages: reward modeling and reward maximization \citep{ouyang2022training, bai2022training, munos2023nash}. Another approach is direct preference optimization (DPO) \citep{rafailov2024direct}, which treats the generative models directly as reward models and trains them on preference data. Compared with the first approach, DPO simplifies the aligning process while maintaining its effectiveness. 
However, both paradigms require a large amount of human preference data. 
In this work, we follow the DPO approach and study its query-efficient modification. The empirical success of RLHF also prompts a series of theoretical works, with a predominant focus on the reward maximization stage, modeling this process as learning a dueling bandit \citep{zhu2023principled, xiong2023gibbs, sekhari2024contextual}. Among these works, \citet{wang2023rlhf,zhan2023query,wu2023making,sekhari2024contextual} stand out for considering query complexity in the learning process. However, \citet{wang2023rlhf,zhan2023query,wu2023making} focus either on worst-case regret bounds or the sample complexity to identify an $\epsilon$-optimal policy, but fail to provide instance-dependent guarantees (Further discussion is deferred to Appendix  \ref{app:discussion}). Only \citet{sekhari2024contextual} offer an instance-dependent analysis; however, their upper bound is $\tilde{O}(A^2 \beta^2 d/\Delta)$, which depends on the size of the action set $A$, limiting the practical applicability of their algorithm. Compared to this work, we provide an instance-dependent regret guarantee without dependency on the action space. Furthermore, based on APPO, we derive a practical algorithm, ADPO, which we empirically verify to demonstrate its superiority. 
We also notice two concurrent works that incorporate active learning with DPO. \citet{mehta2023sample} incorporate active learning to DPO and use the variance of log-probabilities under different dropouts as the uncertainty estimator, which is inefficient in practice. 
\citet{muldrew2024active} also proposed an active learning-based alternative of DPO and leverage reward difference as uncertainty estimator. However, their approach does not involve pseudo labels, which is a key component of our approach. 

\paragraph{Dueling Bandits }
Dueling bandits represent a variant of the multi-armed bandit problem, incorporating preference feedback between two selected arms \citep{yue2012k}. Existing results in this domain generally fall into two categories, shaped by their assumptions about preference probability. The first category of work \citep{yue2012k, falahatgar2017maximum, falahatgar2018limits, ren2019sample, wu2022adaptive, lou2022active} assumes a transitivity property for preference probability and focuses on identifying the optimal action. Our work also belongs to this category. The second category of work \citep{jamieson2015sparse, heckel2018approximate, saha2021optimal, wu2023borda, dudik2015contextual, ramamohan2016dueling, balsubramani2016instance} focuses on general preferences with various criteria for optimal actions, such as Borda winner and Copeland winner.

Expanding beyond the standard dueling bandit problem, \citet{dudik2015contextual} was the first to incorporate contextual information into the dueling bandit framework. Subsequently, \citet{saha2021optimal} studied the $K$-arm contextual dueling bandit problem and proposed an algorithm with a near-optimal regret guarantee. In order to addressing the challenge of a potentially large action space, \citet{bengs2022stochastic} also considered linear function approximation and extended these results to the contextual linear dueling bandit problem and obtained a regret guarantee of $\tilde O(d\sqrt{T})$. Recently, \citet{di2023variance} introduced a layered algorithm, improving the results to a variance-aware guarantee of $\tilde O(d\sqrt{\sum \sigma_t^2})$, where $\sigma_t^2$ denotes the variance of the observed preference in round $t$.

% \subsection{Active Learning}
\paragraph{Active Learning } To mitigate the curse of label complexity, active learning serves as a valuable approach in supervised learning . The first line of work is pool-based active learning \citep{zhang2000value,hoi2006batch,gu2012selective,gu2014batch,citovsky2021batch}. In pool-based active learning, instead of acquiring labels for the entire dataset, the learner strategically selects a batch of the most informative data at each step and exclusively queries labels for this selected data batch. The learner then employs this labeled data batch to update the model. Subsequently, guided by the enhanced model, the learner queries another mini-batch of labels and continues the training process. These steps are iteratively repeated until the model achieves the desired performance level. The strategic selection of informative data significantly reduces the label complexity for supervised learning. The label complexity of pool-based active learning has been extensively studied by \citet{dasgupta2005coarse,dasgupta2005analysis,balcan2006agnostic,balcan2007margin,hanneke2015minimax,gentile2022achieving}.
On the other hand, selective sampling (a.k.a., online active learning) \citep{cesa2005minimizing,cesa2006worst,cesa2009robust,hanneke2021toward} is a learning framework that integrates online learning and active learning. In this framework, the algorithm sequentially observes different examples and determines whether to collect the label for the observed example.
In reinforcement learning, there are also lines of works focusing on the application of active learning. On theoretical side, \citet{schulze2018active, krueger2020active, tucker2023bandits} focuses on active reinforcement learning and directly integrates the query cost into the received reward. \citet{krueger2020active} laid the groundwork for active reinforcement learning by introducing a cost $c$ associated with each reward observation and evaluated various heuristic algorithms for active reinforcement learning. Recently, \citet{tucker2023bandits} studied the multi-arm bandit problem with costly reward observation. Their work not only suggests empirical advantages but also proves an $O(T^{2/3})$ regret guarantee. On the application side, there are also lines of works apply selective sampling to specific circumstance of RLHF in robotics~ \citep{lee2021b,lee2021pebble,liang2022reward}.

\section{Preliminaries}\label{sec:pre}

In this work, we formulate the RLHF problem as a contextual dueling bandit problem \citep{saha2021optimal,di2023variance}. We assume a context set $\cX$, and at the beginning of each round, a contextual variable $x_t$ is i.i.d generated from the context set $\cX$ with the distribution $\cD$. Based on the context $x_t$, the learner then chooses two actions $y_t^1, y_t^2$ from the action space $\cA$ and determines whether to query the environment for preferences between these actions. If affirmative, the environment generates the preference feedback $o_t$ with the following probability $\PP(o_t=1|x_t, y_t^1, y_t^2) = \sigma\big(r(x_t, y_t^1) - r(x_t, y_t^2)\big)$, where $\sigma(\cdot): \mathbb{R} \rightarrow [0,1]$ is the link function and $r(\cdot, \cdot)$ is the reward model.

We consider a linear reward model, e.g., $r(x, y) = \langle \btheta^*, \bphi(x,y)\rangle$, where $\btheta^* \in \RR^d$ and $\bphi: \cX \times \cA \rightarrow \RR^d$ is a known feature mapping. For the sake of simplicity, we use $\bphi_t^1,\bphi_t^2$ to denote $\bphi(x_t, y_t^1),\bphi(x_t,y_t^2)$. Additionally, we assume the norm of the feature mapping $\bphi$ and the underlying vector $\btheta^*$ are bounded.
\begin{assumption}\label{ass:setup}
%For a linear contextual bandit problem with unknown parameter $\btheta^*$, we have
The linear contextual dueling bandit satisfies the following conditions:
\begin{itemize}[leftmargin=*]
    \item For any contextual $x\in \cX$ and action $y \in \cA$, we have $\|\bphi(x,y)\|_2\leq L/2$ and $r(x, y) \leq 1$.\
    \item For the unknown environment parameter $\btheta^*$, it satisfies  $\|\btheta^*\|_2\leq B$.
\end{itemize}
\end{assumption}
For the link function $\sigma$, we make the following assumption, which is commonly employed in the study of generalized linear contextual bandits \citep{filippi2010parametric, di2023variance}.
\begin{assumption}\label{assumption:differ}
    The link function $\sigma$ is differentiable and the corresponding first derivative satisfied $\kappa_{\sigma}\leq \dot{\sigma}(\cdot)$,
    % \begin{align*}
    %    \kappa_{\sigma}\leq \dot{\sigma}(\cdot),
    % \end{align*}
    where $\kappa_{\sigma}>0$ is a known constant.
\end{assumption}
 The learning objective is to minimize the cumulative regret defined as:
\begin{align*}
    \text{Regret}(T) = \sum_{t=1}^T r^*(x_t) - r(x_t, y_t^1),
\end{align*}
where $r^*(x_t)=r^*(x_t,y_t^*)=\max_{y\in \cA} r^*(x_t,y)$ stands for the largest possible reward in context $x_t$. 
% In comparison with prior studies \citep{saha2021optimal, di2023variance}, our regret measure only focuses on the gap between action $y_t^1$ and the optimal action $y_t^*$. 
It is worth noting that prior works \citep{di2023variance,saha2022efficient,sekhari2024contextual} in dueling bandits often define the regret on both action $y_t^1$ and $y_t^2$. However, in the context of RLHF, the model generates multiple candidate responses, and users will choose the most preferable response from the available options. Under this circumstance, sub-optimality is only associated with the selected response. Therefore, we choose the regret defined only on action $y_t^1$.

To quantify the cost of collecting human-labeled data, we introduce the concept of query complexity $\text{Query}(T)$ for an algorithm, which is the total number of data pairs that require human feedback for preference across the first $T$ rounds. 
Note that while some prior work~\citep{tucker2023bandits} counts the cost of requesting for human feedback together with the cost paid for taking certain action, 
in our approach, we distinguish between regret and query complexity as two separate performance metrics for an algorithm.

In addition, we consider the minimal sub-optimality gap    \citep{simchowitz2019non,yang2020q,he2021logarithmic}, which characterizes the difficulty of the bandit problem.
\begin{definition}[Minimal sub-optimality gap]
For each context $x\in\cX$ and action $y\in\cA$, the sub-optimality gap $\Delta(x,y)$ and the minimal gap $\Delta$ are defined as
\begin{align*}   
    \Delta(x,y)=r^*(x)-r(x,y), \quad \Delta=\min_{x\in \cX,y\in \cA}\big\{\Delta(x,y): \Delta(x,y)\ne 0\big\}.
\end{align*}
\end{definition}
In general, a larger sub-optimality gap $\Delta$ between action $y$ and the optimal action $y^*$ implies that it is easier to distinguish between these actions and results in a lower cumulative regret. Conversely, a task with a smaller gap $\Delta$ indicates that it is more challenging to make such a distinction, leading to a larger regret.
\noindent In this paper, we assume the minimal sub-optimality gap is strictly positive.
\begin{assumption}\label{assumption:gap}
The minimal sub-optimality gap is strictly positive, i.e., $\Delta>0$.
\end{assumption}

% \color{blue}
\begin{remark}
In the context of RLHF, given a prompt, the minimal sub-optimality gap $\Delta$ represents the uniform gap between the best answers and the sub-optimal answers, where the optimal answers (or arms in the context of bandits) might not be unique (See Definition 3.3). Typically, for arms with sub-optimality close to 0, it is difficult for humans to discern a quality difference between them. Under this situation, we can roughly consider these arms also as optimal arms (optimal arms may not be unique) and only consider the gap between sub-optimal arms and these optimal arms. Therefore, it is realistic to make such an assumption in the context of RLHF.
\end{remark}
% \color{black}

\begin{algorithm}[ht]
    \caption{Active Proximal Policy Optimization ($\alg$)}\label{algo:po}
    \begin{algorithmic}[1]
    \REQUIRE Regularization parameter $\lambda>0$, 
    and $\pnorm$, 
    an upper bound on the $\ell_2$-norm of $\btheta^*$, confidence radius $\beta$, uncertainty threshold $\Gamma>0$, learning rate $\eta$
    \STATE Set  initial policy $\pi_1(\cdot | \cdot)$ as uniform distribution over the action set $\cA$, $\bSigma_0 \leftarrow \lambda \Ib$, $\cC_0 = \emptyset$
    \FOR{$t=1,\ldots, T$}
        \STATE Compute the MLE $\hat{\btheta}_t$ as in \eqref{eq:mle} and observe $\cA$, select $y_t^2 \sim \text{Uniform}(\cA)$
        \STATE Compute $\hat{D}_t(x_t, y) = \min \bigl\{ \langle \hat{\btheta}_t, \bphi(x_t,y)-\bphi^2_t \rangle + \beta \|\bphi(x_t,y) - \bphi^2_t\|_{\bSigma_{t-1}^{-1}}, 1 \bigr\}$
        \STATE Choose $y^1_t = \argmax_{y} \hat{D}_t(x_t, y)$
        % \STATE Let $(\xb_t, \yb_t) \leftarrow \argmax_{\xb, \yb \in \cA_t} \langle \hat{\btheta}_t, \xb+\yb \rangle + \|\ba-\bb\|_{\bSigma_{t-1}}$ and play the duel $(\ba_t, \bb_t)$
        \IF {$\| \bphi^1_t - \bphi^2_t \|_{\bSigma_{t-1}^{-1}} \leq \Gamma$} \label{line:6}
            \STATE Keep $\bSigma_t = \bSigma_{t-1}$, $\pi_{t+1}(a|s)=\pi_{t}(a|s)$ and $\cC_t = \cC_{t-1}$
            % \STATE $\pi_{t+1}(a|s)=\pi_{t}(a|s)$, $\cC_t = \cC_{t-1}$
        \ELSE
            \STATE Sample $y^1_t \sim \pi_t(\cdot | s_t)$, query for the preference and observe $o_t$\label{line:10}
            % \STATE Query for the  preference and observe $o_t$
            \STATE Update $\bSigma_t = \bSigma_{t-1} + (\bphi^1_t-\bphi^2_t)(\bphi^1_t-\bphi^2_t)^\top$ and $\cC_t = \cC_{t-1} \cup \{t\}$
            \STATE Update $\pi_{t+1}(y|x) \propto \pi_{t}(y|x)\exp\big(\eta \hat{D}_t(y,x)\big)$\label{line:oppo}
        \ENDIF
    \ENDFOR
    \end{algorithmic}
\end{algorithm}

\section{Algorithm}

In this section, we introduce our proposed query-efficient method for aligning LLMs. 
The main algorithm is illustrated in Algorithm~\ref{algo:po}. At a high level, the algorithm leverages the uncertainty-aware query criterion \citep{zhang2023interplay} to issue queries and employs Optimistic Proximal Policy Optimization (OPPO)~\citep{cai2020provably,he2022near} for policy updates.
In the sequel, we introduce the key parts of the proposed algorithm.

% \subsection{Regularized MLE Estimator}
\paragraph{Regularized MLE Estimator }
For each round $t\in[T]$, we construct the regularized MLE estimator \citep{filippi2010parametric,li2017provably} of parameter $\btheta^*$ by solving the following equation:
\begin{align}\label{eq:mle}
    \lambda \btheta + \textstyle{\sum_{\tau \in \cC_{t-1}}}\big[o_{\tau} - \sigma\big(\langle \btheta, \bphi^1_{\tau}-\bphi^2_{\tau} \rangle\big)\big] (\bphi^1_{\tau}-\bphi^2_{\tau}) = \mathbf{0},
\end{align}
where $\cC_t$ denotes the set of rounds up to the $t$-th round for which the preference label is required.
%\todoq{you don't describe and explain $C_t$.}
Compared with previous work on linear dueling bandits \citep{saha2021optimal,di2023variance}, here we only requires part of the human-labelled preference. We construct the MLE estimator with only rounds $\tau \in \cC_t$. In addition, the estimation error between $\hat{\btheta}_t$ and $\btheta^*$ satisfies 
\begin{align*}
    \|\btheta^* - \hat{\btheta}_t \|_{\bSigma_{t-1}} \leq \tilde O\big(\sqrt{d\log |\cC_t|}/\kappa_{\sigma}\big).
\end{align*}
After constructing the estimator $\hat{\btheta}_t$, the agent first selects a baseline action $y^2_t$ and compares each action $y \in \cA$ with the baseline action $y_t^2$. For simplicity, we denote $D_t(x_t, y) = \langle \btheta^*, \bphi(x_t,y)-\bphi^2_t \rangle$ as the reward gap between $y$ and action $y_t^2$. Then, we construct an optimistic estimator $\hat{D}_t$ for the reward gap with linear function approximation and Upper Confidence Bound (UCB) bonus, i.e.,
\begin{align*}
    \hat{D}_t(x_t,y) = \min \{\langle \hat{\btheta}_t, \bphi(x_t,y)-\bphi^2_t \rangle + \beta \|\bphi(x_t,y) - \bphi^2_t\|_{\bSigma_{t-1}^{-1}}, 1\}.
\end{align*}
%\todoq{explain the truncation}
Here we truncate the estimation since the true reward is in $[0,1]$ and therefore their difference is bounded by $1$. With the help of UCB bonus, we can show that our estimated reward gap $\hat{D}_t$ is an upper bound of the true reward gap $D_t$.

% \subsection{Uncertainty-Aware Query Criterion}
\paragraph{Uncertainty-Aware Query Criterion }
To mitigate the expensive costs from collecting human feedback, we introduce the uncertainty-based criterion (Line \ref{line:6}) \citep{zhang2023interplay} to decide whether a pair of action requires $y_t^1$ and $y_t^2$ requires human-labelled preference.
Intuitively speaking, the UCB bonus $\beta \|\bphi_t^1-\bphi_t^2\|_{\bSigma_{t-1}^{-1}}$ captures the uncertainty associated with the preference feedback $o_t$.
Similar criterion has also been used in corruption-robust linear contextual bandits \citep{he2022nearly} and nearly minimax optimal algorithms for learning linear (mixture) Markov decision processes \citep{zhou2022computationally,he2023nearly,zhao2023variance}, where $\beta\|\bphi\|_{\bSigma_{t-1}^{-1}}$ represents the uncertainty of certain action.
For the action pair $(y_t^1,y_t^2)$ with low uncertainty, where the observation is nearly known and provides minimal information, we select the action $y_t^1,y_t^2$ without querying human preference feedback. In this situation, the policy $\pi(\cdot|\cdot)$ remains unchanged as there is no observation in this round. By employing the uncertainty-based data selection rule, we will later prove that the query complexity is bounded. %effectively controlled by:
% \begin{align*}
%     \textbf{Query}(T)= |\cC_T|\leq \tilde O(d^2/\Delta^2).
% \end{align*}

% \subsection{Proximal Policy Optimization}
\paragraph{Proximal Policy Optimization }
In cases where the action pair $(y_t^1, y_t^2)$ exhibits high uncertainty and the uncertainty-aware query criterion is triggered, the agent resample the action $y_t^1$ from policy $\pi_t$ and queries human feedback for the duel $y_t^1, y_t^2$. Upon observing the preference $o_t$, this round is then added to the dataset $\cC_t$. Subsequently, the policy $\pi_{t+1}$ is updated using the Optimistic Proximal Policy Optimization (OPPO) method \citep{cai2020provably,he2022near}, i.e.,
\begin{align*}
    \pi_{t+1}(y|x) \propto \pi_{t}(y|x)\exp\big(\eta \hat{D}_t(y,x)\big).
\end{align*}
In an extreme case where the uncertainty threshold $\Gamma$ is chosen to be $0$, the uncertainty-aware query criterion will always be triggered. Under this situation, Algorithm~\ref{algo:po} will query the human-labeled preference for each duel $(y_t^1,y_t^2)$, and Algorithm~\ref{algo:po} will degenerate to the dueling bandit version of OPPO \citep{cai2020provably}. Under this situation, Algorithm~\ref{algo:po} enjoys $\tilde O(d\sqrt{T})$ regret while having a linear query complexity with respect to the number of rounds $T$.

\section{Theoretical Analysis}\label{sec:main-result}
In this section, we present our main theoretical results.

\begin{theorem}\label{thm:main}
Let $\Delta$ be the minimal sub-optimal gap in Assumption \ref{assumption:gap}. If we set the parameters $\Gamma= \tilde{O}(\Delta/\sqrt{{d}})$, $\lambda=B^{-2}$, $\eta=\tilde O(\sqrt{\Gamma^2 \log \cA /d})$,  and $\beta=\tilde O(\sqrt{d}/\kappa_{\sigma})$ in Algorithm~\ref{algo:po}, then with with probability at least $1-\delta$, the regret for Algorithm~\ref{algo:po} across the first $T$ rounds is upper bounded by
\begin{align*}
    \text{Regret}(T) = \tilde{O}(d^2/\Delta).
\end{align*}
In addition, the query complexity of Algorithm~\ref{algo:po} is upper bounded by:
\begin{align*}
   \text{Query}(T) =  |\cC_T| = \tilde{O}(d^2/\Delta^2).
\end{align*}
\end{theorem}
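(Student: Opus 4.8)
The plan is to prove the regret and query bounds together, since both rely on the same two structural facts: optimism and a pigeonhole-style control on the number of ``uncertain'' rounds. First I would establish the \textbf{confidence/optimism guarantee}. Using the regularized MLE estimation error $\|\btheta^*-\hat\btheta_t\|_{\bSigma_{t-1}}\le\beta$ with $\beta=\tilde O(\sqrt d/\kappa_\sigma)$ (the stated bound $\tilde O(\sqrt{d\log|\cC_t|}/\kappa_\sigma)$), a Cauchy--Schwarz argument in the $\bSigma_{t-1}^{-1}$ norm shows that for every action $y$,
\begin{align*}
    |\langle \hat\btheta_t-\btheta^*, \bphi(x_t,y)-\bphi_t^2\rangle| \le \beta\|\bphi(x_t,y)-\bphi_t^2\|_{\bSigma_{t-1}^{-1}},
\end{align*}
so that $\hat D_t(x_t,y)\ge D_t(x_t,y)$ is a valid optimistic upper bound on the true reward gap (the truncation at $1$ is harmless since the true gap is at most $1$). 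This event holds with probability $1-\delta$ and I would condition on it throughout.

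Next I would handle the \textbf{regret decomposition}, splitting each round according to whether the query criterion in Line~\ref{line:6} fires. For rounds where $\|\bphi_t^1-\bphi_t^2\|_{\bSigma_{t-1}^{-1}}\le\Gamma$ (no query), the key observation is that optimism plus the small-bonus condition forces the chosen $y_t^1$ to have genuinely small suboptimality: the instance-gap argument of \citet{simchowitz2019non,he2021logarithmic} shows that if $y_t^1$ were truly suboptimal, its gap would be at least $\Delta$, yet the optimistic overestimate is at most $\beta\Gamma\le\tilde O(\sqrt d)\cdot\tilde O(\Delta/\sqrt d)\lesssim\Delta$; choosing the constant in $\Gamma=\tilde O(\Delta/\sqrt d)$ appropriately makes this contradiction force $\Delta(x_t,y_t^1)=0$, i.e.\ these rounds contribute \emph{zero} regret. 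For the queried rounds, I would bound the per-round regret by the bonus $2\beta\|\bphi_t^1-\bphi_t^2\|_{\bSigma_{t-1}^{-1}}$ (combining optimism with the OPPO policy-improvement/mirror-descent inequality, which contributes a lower-order $\tilde O(\sqrt{\eta^{-1}\log\cA\,|\cC_T|})$ term controlled by the choice of $\eta$).

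For the \textbf{query complexity} I would invoke the elliptical potential lemma: each queried round updates $\bSigma_t$ by the rank-one term $(\bphi_t^1-\bphi_t^2)(\bphi_t^1-\bphi_t^2)^\top$ with $\|\bphi_t^1-\bphi_t^2\|_{\bSigma_{t-1}^{-1}}>\Gamma$, so the potential $\log\det\bSigma_t$ increases by a constant amount each time, and the standard bound yields $|\cC_T|\le\tilde O(d/\Gamma^2)=\tilde O(d^2/\Delta^2)$ after substituting $\Gamma=\tilde O(\Delta/\sqrt d)$. Feeding this count back into the queried-round regret, a Cauchy--Schwarz over $\cC_T$ together with the elliptical potential sum $\sum_{t\in\cC_T}\|\bphi_t^1-\bphi_t^2\|_{\bSigma_{t-1}^{-1}}^2\le\tilde O(d)$ gives
\begin{align*}
    \sum_{t\in\cC_T}2\beta\|\bphi_t^1-\bphi_t^2\|_{\bSigma_{t-1}^{-1}}\le 2\beta\sqrt{|\cC_T|\cdot\tilde O(d)}=\tilde O\big(\sqrt d\cdot\sqrt{(d/\Gamma^2)\cdot d}\big)=\tilde O(d^2/\Delta).
\end{align*}
The \textbf{main obstacle} I anticipate is the gap-based argument that no-query rounds incur zero regret: it requires carefully matching the constant factors so that $\beta\Gamma$ is provably below $\Delta$ while simultaneously keeping $\Gamma$ large enough that the elliptical-potential count stays at $\tilde O(d^2/\Delta^2)$, and it must correctly absorb the OPPO mirror-descent slack (via the $\eta=\tilde O(\sqrt{\Gamma^2\log\cA/d})$ choice) so that the policy-optimization error does not overwhelm the $\Delta$-level margin separating optimal from suboptimal arms.
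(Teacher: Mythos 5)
Your proposal is correct and follows essentially the same route as the paper's proof: optimism from the regularized MLE confidence bound, zero regret on non-query rounds via $2\beta\Gamma<\Delta$ combined with the minimal-gap assumption, the elliptical potential lemma yielding both the query count $\tilde O(d/\Gamma^2)$ and (via Cauchy--Schwarz) the $\tilde O(\beta d/\Gamma)$ bound on queried-round regret, and the OPPO mirror-descent lemma absorbing the policy-optimization slack through the choice of $\eta$. The only detail you gloss over is the martingale (Azuma--Hoeffding) step needed to pass between the realized differences $\hat D_t(x_t,y_t^*)-\hat D_t(x_t,y_t^1)$ (with $y_t^1\sim\pi_t$ on queried rounds) and the expectations that the mirror-descent lemma actually controls, but that term is $\tilde O(\sqrt{|\cC_T|\log(1/\delta)})$ and hence lower order, exactly as you anticipate.
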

\begin{remark}
    Theorem~\ref{thm:main} suggests that our algorithm achieves a constant level of regret and query complexity respect to the number of rounds $T$. In theory, our algorithm requires a prior knowledge of the sub-optimal gap $\Delta$. In practice where $\Delta$ is unknown, the learner can set the parameter $\Delta$ via grid search process.
\end{remark}
\begin{remark}
In comparison to the instance-dependent regret $\tilde O(A^2d/\Delta)$ obtained by the AURORA algorithm \citep{sekhari2024contextual}\footnote{In our work, we only focused on the regret of one selected action, which slightly differs from the regret in \citet{sekhari2024contextual}.}, our algorithm's regret eliminates the dependency of the action space $A$. Moreover, we achieve an improvement in the query complexity by a factor of $A^3$. 
\end{remark}

\section{Practical Algorithm}

\begin{algorithm}[t]
    \caption{Active Direct Preference Optimization ($\method$)}\label{algo:dpo}
    \begin{algorithmic}[1]
    \REQUIRE Regularization parameter $\beta$, uncertainty threshold $\gamma$, learning rate $\eta$, initial model parameter $\btheta_1$, batch size $S$
    % \STATE Set $\btheta \leftarrow \btheta_0$
    \FOR{$t=1,\ldots, T$}
        \STATE Receive batch of data $\cB_t = \{x_i, y_i^1, y_i^2\}_{i=1}^S$
        \FOR{$i=1,\ldots, S$}
        \STATE Set the confidence $C_{\btheta_t}(x_i, y_i^1, y_i^2)$ as in~\eqref{eq:ucfuction}
        \IF{$C_{\btheta_t}(x_i, y_i^1, y_i^2) \leq \gamma$}
        \STATE Query for the human label and set $o_i$ as the queried preference.
        \ELSE
        \STATE Set $o_i \leftarrow \text{sign}\big(r_{\btheta_t}(x,y^1) - r_{\btheta_t}(x,y^2)\big)$
        \ENDIF
        \ENDFOR
        % \STATE Replace $\{o_i\}_{i=1}^B$ with pseudo labels as in~\eqref{eq:pseudo}
        \STATE Update $\btheta_{t+1} \leftarrow \btheta_t - \eta \nabla_{\btheta} \cL_{\cB_t}(\pi_{\btheta_t}, \pi_{\btheta_1})$
    \ENDFOR
    \end{algorithmic}
\end{algorithm}

In this section, we introduce a practical version of our proposed algorithm based on DPO \citep{rafailov2024direct} and the resulting algorithm is named as Active Direct Preference Optimization ($\method$) and summarized in Algorithm~\ref{algo:dpo}. 
% Inspired from Algorithm~\ref{algo:po}, $\method$ is a modification of DPO equipped with active learning. In this work, we consider the Bradley-Terry-Luce model \citep{bradley1952rank}, where the link function has the form $\sigma(x)=1/(1+e^{-x})$. 
At a high level, our proposed method follows the basic idea of Algorithm~\ref{algo:po} and sets an uncertainty threshold to filter out informative training samples. However, adapting our algorithm to neural network training requires several key modifications as below.

\paragraph{Direct Preference Optimization } We follow the framework of DPO \cite{rafailov2024direct} for policy optimization. In detail, we consider the Bradley-Terry (BT) model \citep{bradley1952rank}, which corresponds to $\sigma(x)=1/(1+e^{-x})$. In RLHF, the objective is to maximize the expected reward regularized by the Kullback-Leibler (KL) divergence from the reference policy $\pi_{\text{ref}}$:
\begin{align}\label{eq:obj-rlhf}
    \max_{\pi} \mathbb{E}_{y\sim \pi(\cdot|x), x \sim \cD}\big[r(x,y) - \beta\mathrm{KL}(\pi(\cdot|x) || \pi_{\text{ref}}(\cdot|x))\big],
\end{align}
where $\beta>0$ is the regularization parameter, $\cD$ is the distribution of the prompts and $\pi_{\text{ref}}$ is the reference policy, which corresponds to the SFT checkpoint. The optimal policy of~\eqref{eq:obj-rlhf} is follows:
\begin{align*}
    \pi^*(y|x) \propto \pi_{\text{ref}}(y|x) \exp(r(x,y)).
\end{align*}
Therefore, given the final model parameter $\btheta$, we can rewrite the reward in the following form:
\begin{align*}
    r_{\btheta}(x,y) = \beta \big(\log \pi_{\btheta}(y|x) - \log \pi_{\text{ref}} (y|x)\big) + \beta Z(x),
\end{align*}
where $Z(x)$ is a constant independent of $y$.
Plugging $r_{\btheta}(x,y)$ into the BT model and fitting the model with the preference labels in dataset $\cD$, we get the following DPO training objective:
\begin{align*}
\cL_{\text{DPO}}(\pi_{\btheta},  \pi_{\text{ref}})  = -\EE_{(x,y^1, y^2, o)\sim \cD} \Big[ \log \sigma \Big( o \cdot\big( r_{\btheta}(x,y^1) - r_{\btheta}(x,y^2) \big) \Big) \Big],
\end{align*}
where $y^1$ and $y^2$ are the two responses to the given prompt $x$, and $o$ is the human preference such that $o=1$ indicates a preference for $y^1$, and $o=-1$ indicates a preference for $y^2$. 
% \textcolor{red}{It worth noting that, as shown in \citet{rafailov2023direct}, DPO is mathematically equivalent to the reward modeling and PPO process in the standard RLHF framework.} 
Compared to standard RLHF, DPO bypasses the reward modeling process and thus eliminates the introduced reward noise. 
% We follow this approach and treat the language model as the reward model and simultaneously as the agent. 

\paragraph{Confidence Estimator } 
The key to achieving query efficiency in Algorithm~\ref{algo:po} is the confidence-based data filter.
% Inspired by Algorithm~\ref{algo:po}, we require a confidence-based criteria to filter out those samples that are not that informative. 
However, in real applications, rewards are no longer necessarily parameterized by a linear function. Thus, the uncertainty estimator cannot be directly transferred to empirical cases. Since the model is essentially predicting the probability of human preference labels, i.e.,
\begin{align}\label{eq:uncertain}
    \mathbb{P}(o=1 | x, y^1, y^2) = \sigma (r_{\btheta}(x, y^1) - r_{\btheta}(x, y^2)),
\end{align}
where $o$ stands for the preference label and $r_{\btheta}$ is the reward model. We can use the reward model's predicted probability as its uncertainty. 
Specifically, if $|r_{\btheta}(x, y^1) - r_{\btheta}(x, y^2|$ is large, then the predicted probability is close to $0$ or $1$, which means the model is confident about its prediction. Otherwise, if $|r_{\btheta}(x, y^1) - r_{\btheta}(x, y^2|$ is close to $0$, the predicted probability is close to $1/2$, which indicates the model's confidence is low. Therefore, we define the following function $C_{\btheta}$:
\begin{align}\label{eq:ucfuction}
    C_{\btheta}(x, y^1, y^2) = |r_{\btheta}(x, y^1) - r_{\btheta}(x, y^2)|,
\end{align}
as the confidence level of the model. %\todoq{C=0, probability =1/2}

\paragraph{Training Objectives }
One key design in $\method$ is the use of pseudo label, which is inspired by previous methods such as \citet{gentile2022achieving}. For given answer pairs, if the model is very confident in its preference label, we then use the preference label predicted by the model (i.e., pseudo label) for training. To be specific, given a prompt $x$ and the corresponding answers $y^1$ and $y^2$, the predicted preference label can be defined as follows:
\begin{align} \label{eq:pseudo}
o_{\btheta}(x, y^1, y^2)  = \begin{cases}
  o & \text{if $C_{\btheta}(x, y^1, y^2) \leq \gamma$}\\
  \text{sign}\big(r_{\btheta}(x,y^1) - r_{\btheta}(x,y^2)\big) & \text{if $C_{\btheta}(x, y^1, y^2) > \gamma$}
\end{cases},
\end{align}
where $o$ is the human preference upon query, sign$(z)$ is the signal of $z$ and $\gamma$ is the confidence threshold (corresponding to the threshold $\Gamma$ in $\alg$). With the predicted preference labels of given prompts and answers, now we can formulate our training objective as the follows:
\begin{align}\label{eq:emloss}
    \cL_\cD (\pi_{\btheta},  \pi_{\text{ref}}) = -\EE_{(x,y^1, y^2)\sim \cD} \Big[ \log \sigma \Big( o_{\btheta}(x,y^1,y^2)\cdot \big(r_{\btheta} (x, y^1) - r_{\btheta} (x, y^2)\big) \Big) \Big],
\end{align}
% where
% \begin{align*}
%     \Lambda_{\btheta}(x, y^1, y^2) = o_{\btheta}(x,y^1,y^2)\cdot\big(r_{\btheta} (x, y^1) - r_{\btheta} (x, y^2)\big).
% \end{align*}
% \color{black}
To make our approach more time efficient in practice, we follow the standard approach in DPO and use mini-batch gradient descent to update the parameters of our model. At each time step, we feed the model with a batch of data $\{ (x_i, y^1_i, y^2_i) \}_{i=1}^S$. We then compute the pseudo labels and update the model parameters by one-step gradient descent.

\section{Experiments}

In this section, we conducted extensive experiments to verify the effectiveness of $\method$. Our experiments reveal that $\method$ outperforms DPO while requiring only up to half of the queries. Additionally, our ablation studies show that involving pseudo-labels plays a key role in the training process.

\definecolor{maroon}{cmyk}{0,0.87,0.68,0.32}
\begin{table*}[t]
    \centering
    % \small
    \caption{Results on objective benchmarks. $\method$ significantly outperforms DPO on ARC, TruthfulQA and performs comparably on HellaSwag, resulting higher average performances. Besides, ADPO only makes 16k queries for Zephyr-Beta and 3k queries for Zephyr-Gemma, which is about only a quarter to half of the queries made by DPO. }
    \vspace*{1pt}
    % \resizebox{0.8\textwidth}{!}{%
    \begin{tabular}{l | c c c c | c }
    \toprule
        Models & ARC & TruthfulQA  & HellaSwag & Average & \# Queries \\
        \midrule
Zephyr-Beta-SFT      & 58.28  & 40.36   & 80.72 & 59.79  & 0 \\
Zephyr-Beta-DPO  & 61.17 & 45.15  & 82.08 & 62.80 & 62k     \\
\rowcolor{LightCyan} 
Zephyr-Beta-$\method$ & \textbf{62.29} & \textbf{52.25} & \textbf{83.11} & \textbf{65.88} & \textbf{16k} \\
\midrule
Zephyr-Gemma-SFT & 55.03 & 46.92 & 81.45 & 61.13 & 0 \\
Zephyr-Gemma-DPO & 58.45 & 52.07 & \textbf{83.48} & 64.67 & 6751 \\
\rowcolor{LightCyan}  Zephyr-Gemma-$\method$ & \textbf{61.01} & \textbf{57.55} & 83.16 & \textbf{67.24} & \textbf{3652} \\
\bottomrule
\end{tabular}
\vspace*{-5pt}
% }
\label{tab:main-result}
\end{table*}

\begin{table*}[ht!]
\centering
    % \small
    \caption{Results on subjective benchmarks including AlpacaEval 2.0 and MT-Bench. Here WR stands for win rate and LC stands for length controlled. $\method$ achieves comparable performance with DPO on starting from Zephyr-Beta-SFT and outperforms DPO starting from Zephyr-Gemma-SFT. Besides, ADPO only makes 16k queries for Zephyr-Beta and 3.6k queries for Zephyr-Gemma, which is about only a quarter to half of the queries made by DPO. }
    \vspace*{1pt}
\begin{tabular}{l | c  c  c | c  c  c}
\toprule
\multirow{2}{*}{Models} & \multicolumn{3}{c|}{MT-Bench} & \multicolumn{3}{c}{Alpaca Eval 2.0} \\
 & First Turn & Second Turn & Average & \multicolumn{1}{l}{LC WR} & \multicolumn{1}{l}{WR} & \multicolumn{1}{l}{Avg. Length} \\
 \midrule
Zephyr-Beta-SFT & 6.82 & 5.94 & 6.39 & 4.59 & 4.69 & 1741 \\
Zephyr-Beta-DPO & \textbf{7.55} & \textbf{7.27} & \textbf{7.41} & \textbf{13.57} & \textbf{12.67} & 1735 \\
\rowcolor{LightCyan} Zephyr-Beta-ADPO & 7.31 & 7.08 & 7.20 & 12.67 & 12.02 & 1801 \\
 \midrule
Zephyr-Gemma-SFT & 5.62 & 5.56 & 5.59 & 0.13 & 0.62 & 4296 \\
Zephyr-Gemma-DPO & 5.94 & 5.49 & 5.72 & 10.70 & 3.68 & 9064 \\
\rowcolor{LightCyan} Zephyr-Gemma-ADPO & \textbf{6.53} & \textbf{6.49} & \textbf{6.51} & \textbf{15.85} & \textbf{3.81} & 8967 \\
\bottomrule
\end{tabular}
\vspace*{-3pt}
\label{tab:main-result-alpaca-mt}
\end{table*}

\subsection{Experimental Setup}\label{sec:exp-setup}

\paragraph{Models and Dataset} We start from two different base models \texttt{zephry-7b-sft-full}\footnote{https://huggingface.co/alignment-handbook/zephyr-7b-sft-full} (Zephyr-Beta-SFT) and \texttt{zephyr-7b-gemma-sft-v0.1}\footnote{https://huggingface.co/HuggingFaceH4/zephyr-7b-gemma-sft-v0.1} (Zephyr-Gemma-SFT), which is supervised-finetuned from Mistral-7B \citep{jiang2023mistral} model and gemma-7B~\citep{team2024gemma} correspondingly. Zephyr-Beta-SFT is obtained by conducting SFT on Ultrachat-200k \citep{ding2023enhancing} dataset and Zephyr-Gemma-SFT is obtained by conducting SFT on deita-10k-v0-sft~\citep{liu2023makes}. We follow the approach in alignment-handbook\footnote{https://github.com/huggingface/alignment-handbook} and adopt the corresponding human-preference datasets. Specifically, we use Ultrafeedback-binarized \citep{ding2023enhancing} to train Zephyr-Beta-SFT and dpo-mix-7k\footnote{https://huggingface.co/datasets/argilla/dpo-mix-7k} to train Zephyr-Gemma-SFT.

% \paragraph{Baseline Method and Hyper-parameters.} We use the standard direct preference optimization as our baseline. We consider both full-finetune and  LoRA-finetune~\citep{hu2021lora} to optimize the model. 

\paragraph{Baseline and Evaluation } We consider DPO as our baseline and use full-finetune to optimize the models for both DPO and $\method$. Please refer to Appendix~\ref{app:add-exp-setup} for more details regarding the selection of hyperparameters. 
We adopt both objective and subjective evaluation techniques to evaluate the resulting models. Specifically, we employ ARC~\citep{clark2018think}, HellaSwag~\citep{zellers2019hellaswag} and TruthfulQA~\citep{lin2021truthfulqa} as benchmarks for objective evaluation. Among these datasets, ARC~\citep{clark2018think} and HellaSwag~\citep{zellers2019hellaswag} focus on the language models' capability of commonsense reasoning, while TruthfulQA~\citep{lin2021truthfulqa} focuses on human falsehood mimic. For subjective benchmarks, we consider AlpacaEval 2.0 (AlpacaEval) and MT-Bench~\citep{zheng2024judging}. AlpacaEval employs AlpacaFarm \citep{dubois2024alpacafarm}, which is made up of general human instructions, as its set of prompts. During evaluating, the model responses and the reference response generated by GPT-4-Turbo are fed into a GPT-4-Turbo for preference annotation and the win rate measures the models capability. MT-Bench is composed of 80 high-quality multi-turn open-ended questions covering a variety of topics. The generated answers are also judged by GPT-4, which gives scores directly without comparison. Please refer to Appendix~\ref{app:add-exp-setup} for more detailed discussion of the datasets and evaluation. 

% We select a subset of the widely used Huggingface Open LLM Leaderboard \citep{open-llm-leaderboard} for our evaluation. Open LLM Leaderboard contains a bunch of different datasets covering a wide range of tasks, offering a comprehensive assessment of different aspects of large language models. Empirically, we find that training with DPO does not guarantee performance improvement on all datasets, so we only compare DPO with $\method$ on those datasets that DPO brings consistent improvement. 

\begin{figure*}[ht]
\centering   
\subfigure[ARC Challenge]{\label{fig:main_arc_full}\includegraphics[width=0.32\textwidth]{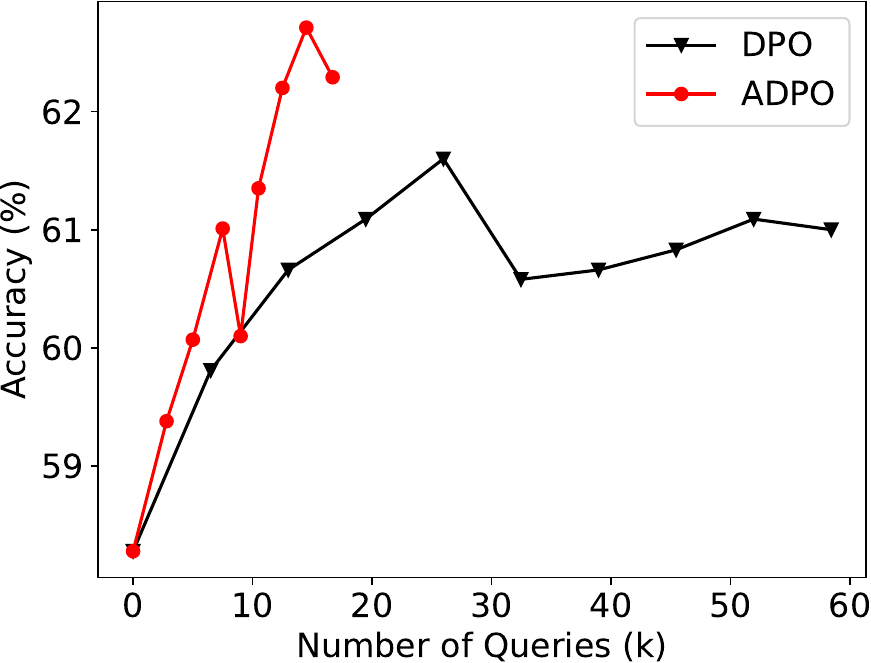}}
\subfigure[TruthfulQA]{\label{fig:main_truth_full}\includegraphics[width=0.32\textwidth]{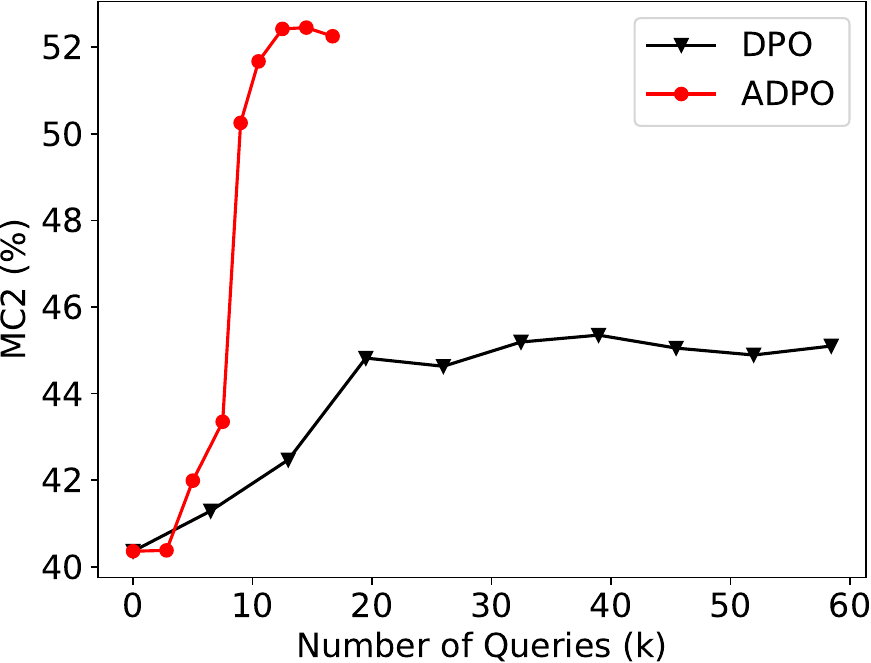}}
\subfigure[HellaSwag]{\label{fig:main_hella_full}\includegraphics[width=0.33\textwidth]{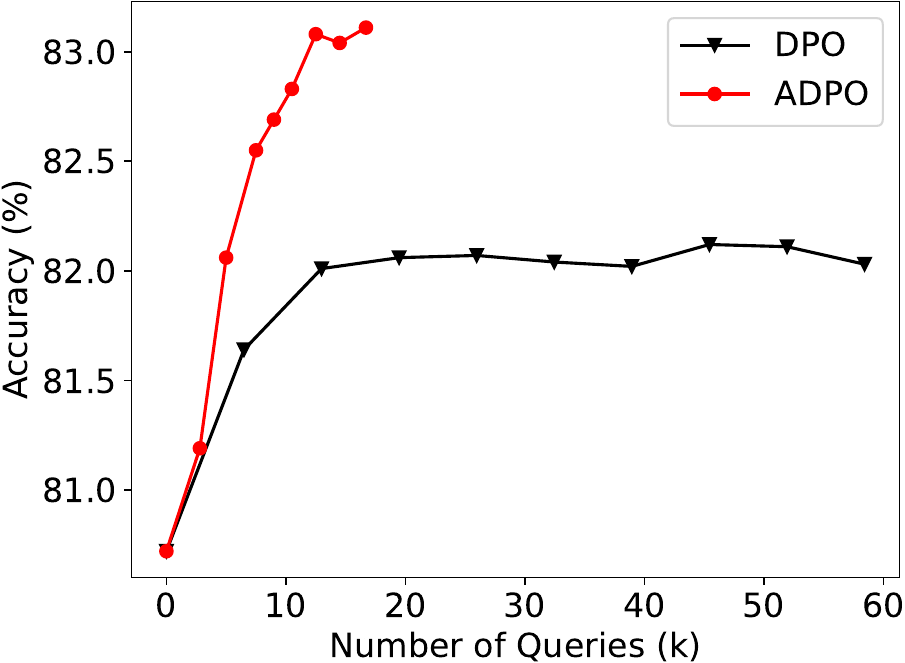}}
\caption{The test accuracy curve of DPO and $\method$ starting from Zephyr-Beta-SFT. The x-axis is the number of queries and the y-axis is the metric for corresponding dataset. Compared to DPO, $\method$ enjoys a faster performance improvement and a higher performance upper bound.}
\label{fig:curve_full}
\end{figure*}

\begin{figure*}[ht]
\centering   
\subfigure[ARC Challenge]{\label{fig:main_arc_gemma}\includegraphics[width=0.318\textwidth]{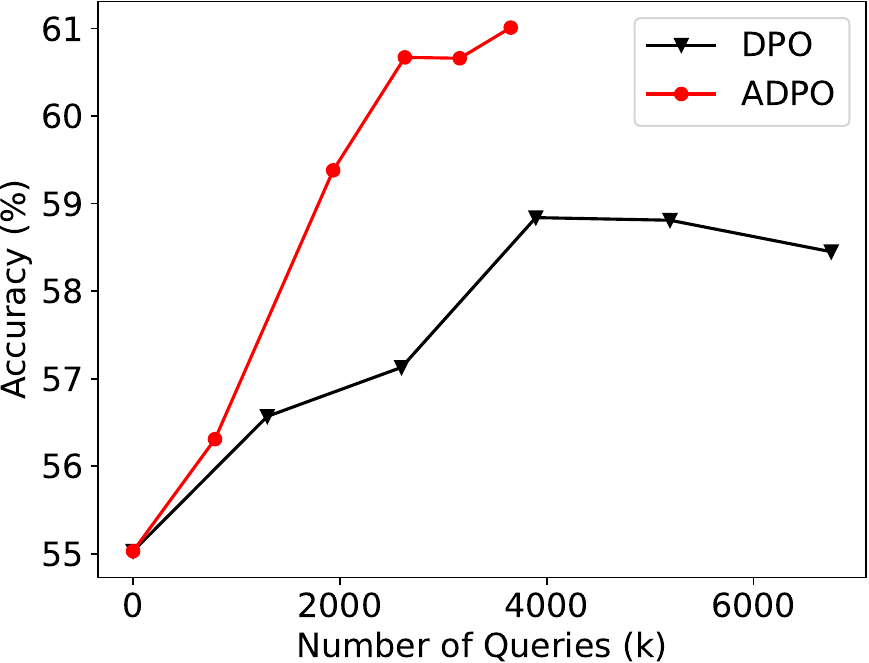}}
\subfigure[TruthfulQA]{\label{fig:main_truth_gemma}\includegraphics[width=0.318\textwidth]{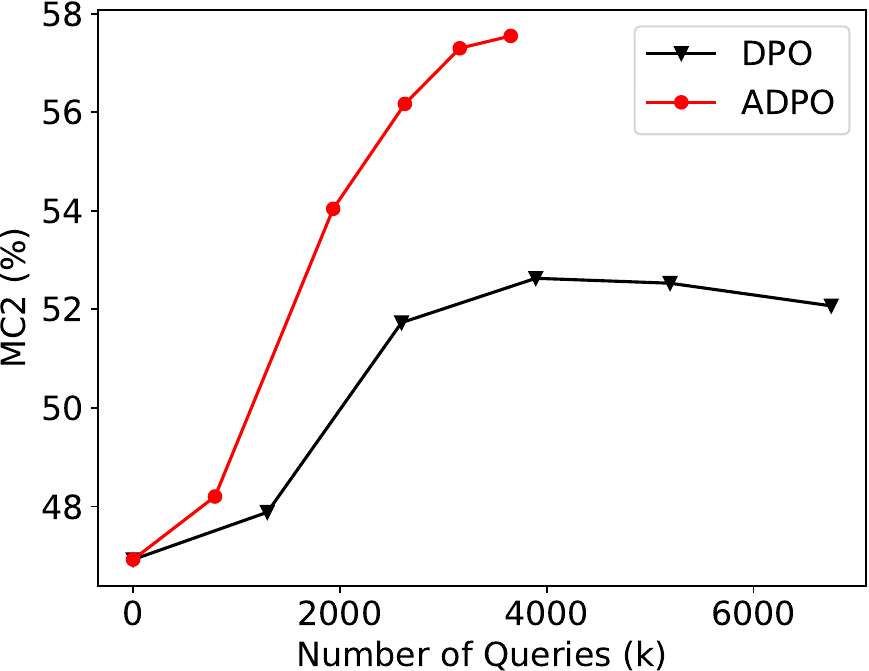}}
\subfigure[HellaSwag]{\label{fig:main_hella_gemma}\includegraphics[width=0.33\textwidth]{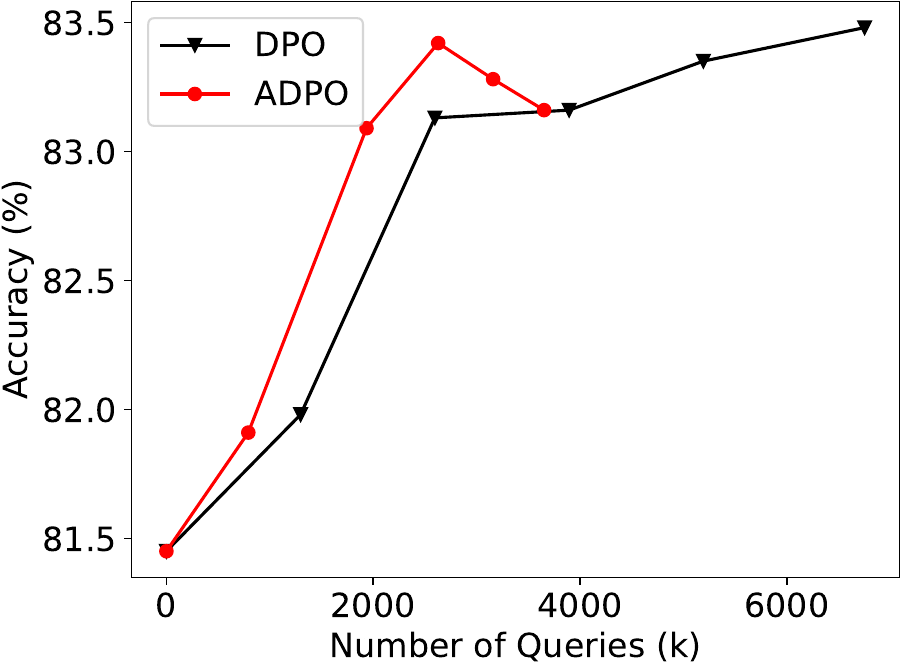}}
% \vspace{-1.\baselineskip}
\caption{The test accuracy curve of DPO and $\method$ starting from Zephyr-Gemma-SFT. The x-axis is the number of queries and the y-axis is the metric for corresponding dataset. Compared to DPO, $\method$ enjoys a faster performance improvement and a higher performance upper bound.}
\label{fig:curve_gemma}
\end{figure*}

\subsection{Experimental Results}

% \paragraph{Performance of $\method$.} 
\paragraph{Results on Objective Benchmarks }
The results on subjective benchmarks are presented in Table~\ref{tab:main-result}. We see that DPO and $\method$ improve the average score by a large margin starting from both Zephyr-Beta-SFT and Zephyr-Gemma-SFT. As for Zephyr-Beta, on TruthfulQA, ADPO outperforms DPO prominently by a margin of 7.1\%, and also outperforms DPO on ARC and HellaSwag by 1.08\% and 1.03\% respectively. Reflecting on the average score, we see that ADPO outperforms DPO by a margin of 3.08\%. As for Zephyr-Gemma, ADPO outperforms DPO prominently by a margin of~5.48\% on TruthfulQA and 2.56\% on ARC. ADPO also reaches a performance comparable to DPO on HellaSwag. Finally, reflecting on the average score, we see that ADPO outperforms DPO by a margin of 2.57\%. To sum up, results on both models shows the superiority of ADPO. Besides the performance on the benchmarks, we see that $\method$ only requires 16k queries for Zephyr-Beta and~3.6k for Zephyr-Gemma, which is only about a half of the size of the training dataset.

\paragraph{Results on Subjective Benchmarks } The results on subjective benchmarks are presented in Table~\ref{tab:main-result-alpaca-mt}. For Zephyr-Beta, we see that $\method$ achieves comparable performance with DPO. In detail, On MT-Bench, we see that $\method$ improves the average performance from $6.39$ to $7.20$, which is much more significant comparing to its gap with DPO of $0.21$. Similarly, on AlpacaEval, $\method$ also improve the LC win rate by ar margin of $8.08$, which is much more significant than its gap to DPO. 
For Zephyr-Gemma, we see that $\method$ outperforms DPO by a considerable margin. In detail, On MT-Bench, we see that $\method$ achieves a performance of $6.51$ compared $5.72$ achieved by DPO. Similarly, on AlpacaEval, $\method$ achieves a LC win rate of $15.85$, which also surpasses DPO by a large margin.

\paragraph{Query Efficiency } To further demonstrate the query efficiency for $\method$, we plot the test accuracy curves for $\method$ and DPO as the numbers of queries increase on selected datasets. The curves for ARC, HellaSwag and TruthfulQA starting from Zephyr-Beta-SFT and Zephyr-Gemma-SFT are shown in Figure~\ref{fig:curve_full} and Figure~\ref{fig:curve_gemma} respectively. 
For Zephry-Beta, we see that the growth of DPO's performance for ARC almost stops when query number reaches about 30k. This trend can also be observed on TruthfulQA after 20k queries and HellaSwag after about 15k queries. In contrast, the performance of $\method$ enjoys a faster improvement when training with the first about 10k results and maintains at a preferable level after that. 
For Zephyr-Gemma, we observe a similar pattern. The growing speed of the performance of DPO either slows down significantly after making 3k to 4k queries, as shown by Figure~\ref{fig:main_arc_gemma} and Figure~\ref{fig:main_truth_gemma}, or maintains at a very low level (Figure~\ref{fig:main_hella_gemma}).
These results suggest that $\method$ can effectively select the most informative data and only make queries for these preference labels.

% \subsection{Ablation Study}\label{sec:ablation}

\section{Conclusion and Future Work}

In this work, we considered query-efficient methods for aligning LLMs with human preference. We first formulated the problem as a contextual dueling bandit. Under linear reward and sub-optimal gap assumption, we proposed an active-learning-based algorithm, $\alg$. Our theoretical analysis shows that our algorithm enjoys a constant instance-dependent regret upper bound and query complexity. We then adapted our algorithm to direct preference optimization and proposed a query efficient DPO method, $\method$. We conducted experiments starting from two models, Zephyr-Beta-SFT and Zephyr-Gemma-SFT and evaluated the resulting models on both objective benchmarks and subjective benchmarks. Results show that, $\method$ achieves a comparable or even better performance compared to DPO with only less than half the demands on the human preference labels. Despite the good performance $\method$ achieves, since it uses DPO as the framework of our practical method, our theoretical analysis of $\alg$ cannot be directly applied to $\method$. We leave the theoretical analysis of $\method$ as our future work.

% \newpage

\appendix

% \section{Additional Related Work}

% \section{Additional Information about Evaluation Benchmark}

\section{Further Discussions on Related Work}\label{app:discussion}
In this section, we provide further discussion on previous works \citep{wang2023rlhf,zhan2023query,wu2023making,sekhari2024contextual}, which consider query complexity in the dueling bandit setting, and explain why they fail to achieve an instance-dependent regret guarantee.

\subsection{Comparison with \citet{wang2023rlhf}}
\citet{wang2023rlhf} proposed a general framework, P2R, for efficiently querying human preferences, and later extended it to a white-box algorithm (P-OMLE) with a specialized analysis. However, the P2R algorithm relies on a comparison oracle that is stronger than ours. In the bandit setting, the oracle in \citet{wang2023rlhf} can return preference labels between responses to different prompts, which often exceeds the abilities of typical users. In contrast, our oracle only requires preferences between responses generated from the same prompt. Furthermore, P2R algorithms necessitate multiple independent comparisons between a baseline trajectory and user-generated trajectories, making it impractical to ask a single user for multiple independent preferences on the same query. Our ADPO algorithm, by contrast, only requires one preference feedback per query, making it much more user-friendly.

Additionally, in the linear reward setting, the query complexity for the Preference-based OMLE (white-box) algorithm is $\tilde O(d^2/\Delta^2)$\footnote{The query complexity of P-OMLE has a logarithmic dependency on the reward function space $\RR$. For the linear reward function class, where $\log \RR = d$, their complexity becomes $d \log \RR/\Delta^2 = d^2/\Delta^2$.}, which is the same as ours. However, P-OMLE requires solving an optimization problem over complex confidence regions, resulting in an intractable planning phase. In comparison, our APPO algorithm introduces an explicit confidence bonus to bypass this complexity and uses a policy optimization method, which is more tractable and closely aligned with practical RLHF methods, while still achieving the same query complexity.

\subsection{Failures in Achieving Instance-Dependent Regret Guarantees} 
Recently, \citet{zhan2023query} proposed a pure-exploration style algorithm (REGIME) that can identify the $\epsilon$-optimal policy with a query complexity of $\tilde{O}(d^2/\epsilon^2)$, where $d$ is the dimension of the feature space. However, it is important to note that the output policy may be a randomized policy, and does not guarantee a constant regret, even under the assumption of a minimal sub-optimality gap $\Delta$. Specifically, the minimal sub-optimality gap does not prevent a randomized policy from incurring regret between 0 and $\Delta$. Thus, even for an $\epsilon$ much smaller than the sub-optimality gap $\Delta$, there is no guarantee that a randomized algorithm will always achieve zero regret. For example, a policy that selects the optimal action with a probability of 50\% and a $\Delta$-suboptimal action with a probability of 50\% will result in a regret of $\epsilon = \Delta/2$. In this situation, the REGIME algorithm may lead to linear regret with respect to $T$ and fail to achieve instance-dependent regret guarantees.

A similar issue arises when transferring the sample complexity guarantee in \citet{wang2023rlhf} to an instance-dependent regret bound. Additionally, we observe that in Proposition 5 of \citet{wang2023rlhf}, the P2R framework achieves finite sample complexity using the UCBVI algorithm \citep{azar2017minimax}. It is important to note that the original UCBVI algorithm employs deterministic policies for each episode. However, \citet{azar2017minimax} only provides a $\sqrt{T}$ regret guarantee for the first $T$ rounds, rather than a sample complexity guarantee. To address this, \citet{jin2018q} demonstrates that any algorithm with sublinear regret can derive a finite sample complexity by randomly selecting a policy from the first $T$ rounds, resulting in a final randomized policy. After this step, the output policy may no longer be deterministic and fails to provide instance-dependent regret guarantees.
 
Another related work, \citet{wu2023making}, proposed a sampling-based algorithm (PR-LSVI) that provides a $\tilde{O}(d^3\sqrt{T})$ regret guarantee for the first $T$ rounds, which is not directly related to the sub-optimality gap. Consequently, a random mixture over the first $T$ rounds is required to identify a near-optimal policy, and it fails to achieve constant regret even in the presence of a positive sub-optimality gap.

As demonstrated above, all of these works can only find a random policy that achieves $\epsilon$-optimality, which cannot provide an instance-dependent regret guarantee, even with the assumption of a minimal sub-optimality gap.

\section{Additional Experiment Details}\label{app:add-exp-setup}

\paragraph{Hyper-parameters for Training Zephyr-Beta } We trained our models on 4$\times$NVIDIA A100 GPUs, with about 80G memory for each GPU. 
We set the learning rate to 5e-7 for both DPO and $\method$.
We use a linear learning rate scheduler with a warm-up ratio of $0.1$. The batch size per device is set to 4 and the gradients are accumulated every 4 steps, resulting in equivalent batch size 64. We set dropout to $0.1$ and the regularization parameter $\beta=0.1$ for both DPO and $\method$. For $\method$, we set the uncertainty threshold $\gamma=1.3$. We trained one epoch for both DPO and $\method$, which takes roughly 9 hours.

\paragraph{Hyper-parameters for Training Zephyr-Gemma } We trained our models on 4$\times$NVIDIA A100 GPUs, with about 80G memory for each GPU. 
We set the learning rate to 5e-7 for both DPO and $\method$.
We use a linear learning rate scheduler with a warm-up ratio of $0.1$. The batch size per device is set to 4 and the gradients are accumulated every 4 steps, resulting in equivalent batch size 64. We set dropout to $0.1$ and the regularization parameter $\beta=0.1$ for both DPO and $\method$. For $\method$, we set the uncertainty threshold $\gamma=1.5$. We trained one epoch for both DPO and $\method$, which takes roughly 1 hour. 

\paragraph{Evaluation Setup } For subjective evaluation benchmarks, we follow the standard setup specified in the original repositories. For obejctive benchmarks, we use few-show learning to prompt the LLMs. Specifically, the few-shot number of ARC is set to 25, HellaSwag to 10 and TruthfulQA to 0. We use \texttt{acc\_norm} as the metric for ARC and HellaSwag, and \texttt{mc2} for TruthfulQA. 

% The metrics and few-shot numbers for our adopted objective benchmarks are summarized in Table~\ref{tab:open-llm-info}.

% \begin{table*}[h]
%     \centering
%     \caption{Detailed information of adopted objective benchmarks. For each evaluation dataset, we present the number of few-shot examples and metric adopted for evaluation.}
%     % \resizebox{0.8\textwidth}{!}{%
%     \begin{tabular}{l | c c c c c c c }
%     \toprule
%         Datasets & ARC & TruthfulQA  & Hellaswag \\
%         \midrule
%         \# few-shot & 25 & 0  & 10   \\
%         Metric & \texttt{acc\_norm} & \texttt{mc2}  & \texttt{acc\_norm}   \\
%     \bottomrule
%     \end{tabular}%
%     % }
%     \label{tab:open-llm-info}
% \end{table*}

\section{Additional Experiment Results}

In this section, we present the results starting from Zephyr-Beta-SFT using LoRA-finetuning~\citep{hu2021lora}, which serves as an addition results.

\paragraph{Experiment Setup } We trained our models on 4$\times$NVIDIA RTX A6000 GPUs, with about 49G memory for each GPU. We set the LoRA rank to 64, $\alpha=16$, and dropout to $0.1$ and learning rate to 1e-5. We use a linear learning rate scheduler with a warm-up ratio of $0.1$. The batch size per device is set to 4 and the gradients are accumulated every 4 steps, resulting in equivalent batch size 64. We set the regularization parameter $\beta=0.1$ for both DPO and $\method$. For $\method$, we set the uncertainty threshold $\gamma=1.5$. We trained one epoch for both DPO and $\method$, which takes roughly 7 hours. We only evaluate the obtained checkpoints on objective benchmarks due to the costly nature of calling external large language models as the judge.

\begin{table*}[ht]
    \centering
    % \small
    \caption{Result on objective benchmarks for LoRA finetuning on Zephyr-7B-Beta. $\method$ significantly outperforms DPO on ARC, TruthfulQA and HellaSwag. Besides, ADPO only makes 32k queries, which is about only a quarter to half of the queries made by DPO. }
    % \resizebox{0.8\textwidth}{!}{%
    \begin{tabular}{l | c c c c | c}
    \toprule
        Models & ARC & TruthfulQA & HellaSwag & Average & \# Queries \\
        \midrule
Zephyr-Beta-SFT      & 58.28  & 40.36  & 80.72 & 59.79  & 0 \\
\midrule
% DPO (550)  & 60.24  & 41.41 & \textbf{77.27} & 82.28 & 58.60           & 35.7k   \\
Zephyr-Beta-DPO (LoRA) & 60.58 & 41.88 & 82.34 & 61.60 & 62k     \\
\rowcolor{LightCyan} 
Zephyr-Beta-$\method$ (LoRA) & \textbf{61.26} & \textbf{45.52} & \textbf{83.21} & \textbf{63.33} & \textbf{32k} \\
\bottomrule
\end{tabular}%
% }
\label{tab:lora-result}
\end{table*}

\begin{figure*}[ht!]
\centering   
\subfigure[ARC Challenge]{\label{fig:main_arc}\includegraphics[width=0.33\textwidth]{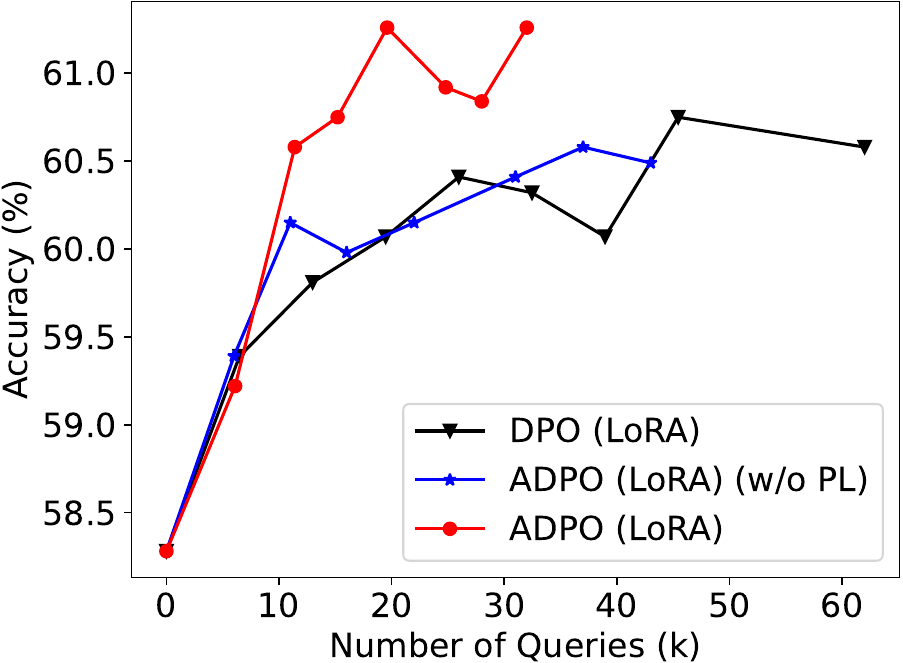}}
\subfigure[TruthfulQA]{\label{fig:main_truth}\includegraphics[width=0.318\textwidth]{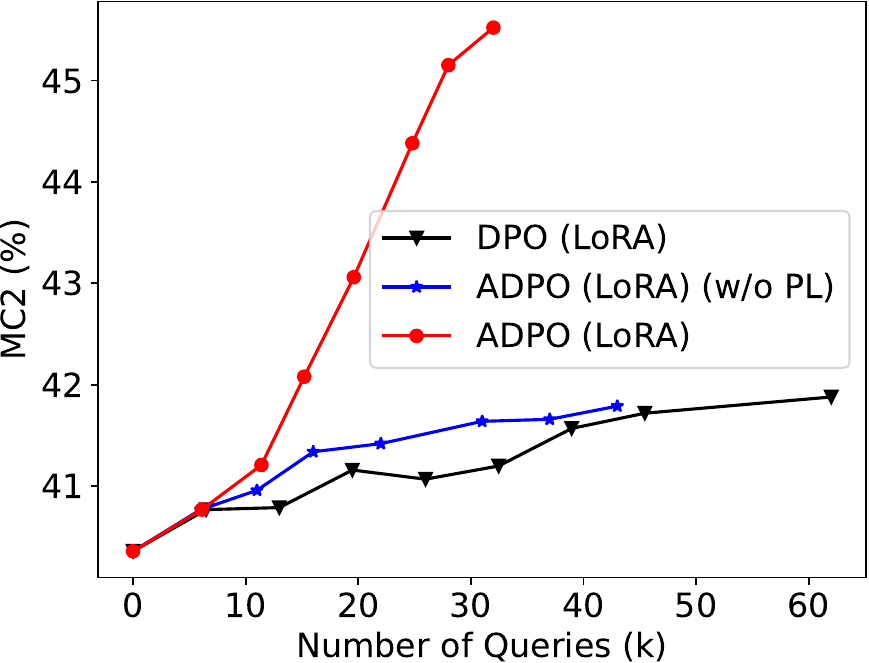}}
\subfigure[HellaSwag]{\label{fig:main_hella}\includegraphics[width=0.33\textwidth]{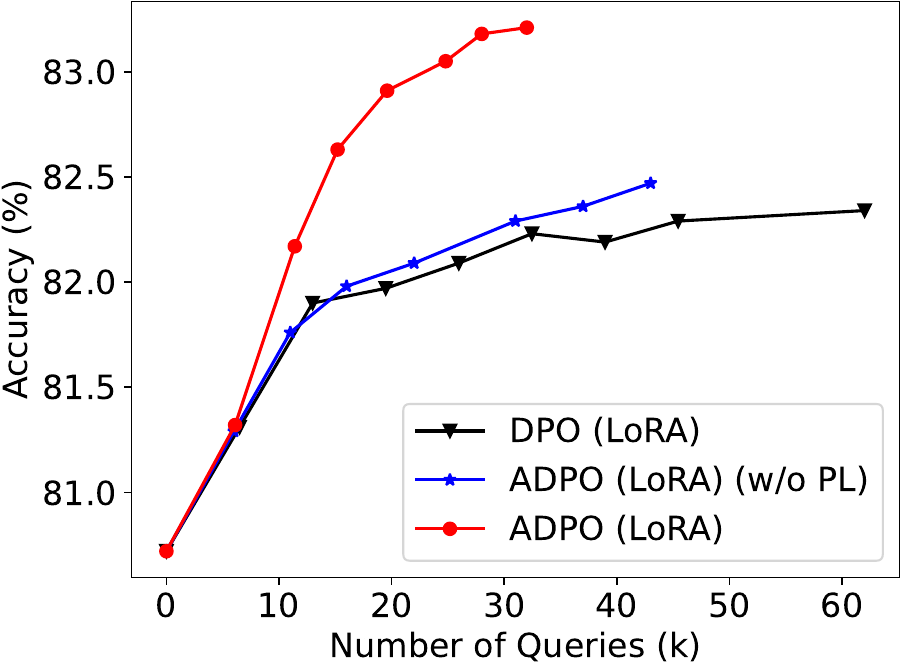}}
\caption{The test accuracy curve of DPO, $\method$ (w/o PL) and $\method$ under LoRA-finetune. The x-axis is the number of queries and the y-axis is the metric for corresponding dataset. Compared to DPO and $\method$ (w/o PL), $\method$ enjoys a faster performance improvement and a higher performance upper bound.}
\label{fig:curve}
\end{figure*}

\paragraph{Benchmark Performances } The results are summarized in Table~\ref{tab:lora-result}. We observe a similar results as for full-finetuning. The results show that both DPO and $\method$ improve the average score by a large margin. ADPO outperforms DPO on TruthfulQA by a relatively large margin of 3.64\% and also reaches an at-least comparable performance on other three datasets. Finally, reflecting on the average score, we see that ADPO outperforms DPO by a margin of 1.73\%. Besides, we see that $\method$ only requires 32k queries, which is only about half of the size of the training dataset. These results show that with much less number of queries, $\method$ can reach a comparable or even superior performance than DPO, which is consistent with results under full-finetune.

\paragraph{Query Efficiency } We plot a set of similar test accuracy curves for $\method$ and DPO for LoRA-finetuning. The results are presented in Figure~\ref{fig:curve}. Here we observe a similar pattern. The growing speed of the performance of DPO either slows down significantly after making 15k to 30k queries, as shown by Figure~\ref{fig:main_arc} and Figure~\ref{fig:main_truth}, or maintains at a very low level (Figure~\ref{fig:main_hella}).
These results suggest that $\method$ can effectively select the most informative data and only make queries for these preference labels. 

% \todok{Add description of Experiment Here}

\section{Ablation Studies}

In this section, we study the impact of involving pseudo labels in the training process. Due to the time-costly nature of training language models, we consider LoRA in our ablation study. 
% Please refer to Appendix~\ref{app:ablation-thres} for additional results on the selection value of uncertainty threshold.

\subsection{Impact of Pseudo Labels} 

\begin{table*}[ht!]
    \centering
    \caption{The effect of pseudo-labels under LoRA-finetune setting. $\method$ performs better than $\method$ (w/o PL) in terms of average scores with fewer queries.}
    % \resizebox{0.8\textwidth}{!}{%
    \begin{tabular}{l c | c c c c | c}
    \toprule
        Model & $\gamma$ & ARC & TruthfulQA & HellaSwag & Average & \# Queries \\
        \midrule
$\method$ (LoRA) (w/o PL) & 0.8 & 60.49 & 41.39  & 82.23 & 61.37 & 38k \\
$\method$ (LoRA) (w/o PL) & 1.0 & 60.49 & 41.62  & 82.32 & 61.48 & 40k \\
$\method$ (LoRA) (w/o PL) & 1.2 & 60.49 & 41.79 & 82.47 & 61.58 & 43k \\
\midrule
$\method$ (LoRA) & 1.5 & \textbf{61.26} & \textbf{45.52} & \textbf{83.21} & \textbf{63.33} & \textbf{32k} \\
    \bottomrule
    \end{tabular}%
    \label{tab:neglect}
\end{table*}

An alternative to active learning is to directly follow Algorithm~\ref{algo:po} and simply neglect those training data with high confidence. 
Since neglected samples will not affect the loss and the corresponding gradient, we set the label to $0$ so that they will not contribute to $\nabla_{\boldsymbol{\theta}}\mathcal{L}$ in Eq.~\eqref{eq:emloss} during the learning process. Formally, we define the label $o_{\boldsymbol{\theta}}'$ as follows:
\begin{align*}
    o_{\btheta}'(x, y^1, y^2) = \begin{cases}
      o & \text{if $C_{\btheta}(x, y^1, y^2) \leq \gamma$}\\
      0 & \text{if $C_{\btheta}(x, y^1, y^2) > \gamma$}
    \end{cases}.
\end{align*}
We keep the remaining part of our method the same and denote this method as ``$\method$ (LoRA) (w/o PL)''. In this experiment, we also conduct a grid search for the confidence threshold $\gamma$ and finally pick $\gamma$ to be 0.8, 1.0, and 1.2. The performances of the trained models are shown in Table~\ref{tab:neglect}. We also plot the training curve in Figure~\ref{fig:curve}. The results show that, without pseudo-labels, the performance suffers from a significant downgrade in average score compared to $\method$. The training curves further indicate that, without pseudo labels, the training dynamics are much more similar to vanilla DPO. These results show the crucial role of pseudo-labels in the active learning process.

\subsection{Value of Confidence Threshold}\label{app:ablation-thres}

\begin{table*}[ht]
    \centering
    \caption{The effect of confidence threshold in practice setting. We vary the value of $\gamma$ and report the evaluation results. When $\gamma$ is increasing, $\method$ made more queries and the performance pattern is getting closer to DPO.}
    % \resizebox{0.8\textwidth}{!}{%
    % \small{
    \begin{tabular}{c c | c c c c | c}
    \toprule
    Method &$\gamma$ & ARC & TruthfulQA  & HellaSwag & Average & \# Queries \\
        \midrule
DPO  & - & 60.58 & 41.88 & 82.34 & 61.60     & 35.7k \\
\midrule
\multirow{4}{*}{$\method$} & 1.0 & 61.01 & 48.41 & 83.35 & 64.26     & 16k   \\
& 1.3 & 61.43 & 47.56 & 83.48 & 64.16 & 24k   \\
& 1.5 & 61.26 & 45.52 & 83.21 & 63.33  & 32k   \\
& 1.8 & 60.92 & 43.20 & 82.13 & 62.08 & 43k \\  
    \bottomrule
    \end{tabular}%
    % }
    \label{tab:uncertainty}
\end{table*}
\color{black}

We study the impact of different confidence thresholds. We varies the value of $\gamma$ to 1.0, 1.3, 1.5, and 1.8. For each $\gamma$, we count the preference labels used by the models and evaluate the trained models on the datasets. As shown in Table~\ref{tab:uncertainty}, when the confidence threshold is small, with more predicted labels, these models perform better on the TruthfulQA dataset. On the other hand, when the confidence threshold goes larger, the models are making more queries, and the performance patterns become closer to the DPO baseline. 
Another observation is that for all our chosen $\gamma$, $\method$ consistently outperforms the DPO baseline, which implies that $\method$ is not very sensitive to the uncertainty threshold and a coarse grid search of confidence threshold can introduce a fairly good performance.

\section{Proof of Theorems in Section \ref{sec:main-result}}\label{app:proof-of-main}

In this section, we provide the proof of Theorem \ref{thm:main} and we first introduce several lemmas. The following lemma provides an upper bound on the query complexity and the corresponding dataset size $|\cC_T|$.
\begin{lemma}[Modified from Lemma 4.5, \citealp{zhang2023interplay}]\label{lem:label-cplx}
Given a uncertainty threshold $0 < \Gamma \leq 1$, if we set the regularization parameter $\lambda = B^{-2}$, then for each round $t\in [T]$, we have $|\cC_t| \leq |\cC_T| \leq 16d\Gamma^{-2}\log(3LB\Gamma^{-1})$.
\end{lemma}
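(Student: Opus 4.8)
The plan is to run the standard elliptical-potential (log-determinant) argument on the design matrix $\bSigma_t$, exploiting that $\bSigma_t$ is updated \emph{only} in those rounds that are added to $\cC_t$. Writing $\mathbf{z}_\tau := \bphi^1_\tau-\bphi^2_\tau$, a round $\tau$ enters $\cC_\tau$ precisely when the query criterion on Line~\ref{line:6} fails, i.e.\ when $\|\mathbf{z}_\tau\|_{\bSigma_{\tau-1}^{-1}}>\Gamma$, and in that case $\bSigma_\tau=\bSigma_{\tau-1}+\mathbf{z}_\tau\mathbf{z}_\tau^\top$; in every other round $\bSigma_\tau=\bSigma_{\tau-1}$. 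By the matrix determinant lemma, $\det(\bSigma_\tau)=\det(\bSigma_{\tau-1})\bigl(1+\|\mathbf{z}_\tau\|_{\bSigma_{\tau-1}^{-1}}^2\bigr)$ at queried rounds, so telescoping the ratios $\det(\bSigma_\tau)/\det(\bSigma_{\tau-1})$ over $[T]$ (non-queried rounds contribute a factor $1$) and using $\det(\bSigma_0)=\lambda^d$ gives
\begin{align*}
\det(\bSigma_T)=\lambda^d\prod_{\tau\in\cC_T}\bigl(1+\|\mathbf{z}_\tau\|_{\bSigma_{\tau-1}^{-1}}^2\bigr)\ge \lambda^d(1+\Gamma^2)^{|\cC_T|}.
\end{align*}

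For the matching upper bound I would use $\|\mathbf{z}_\tau\|_2\le \|\bphi^1_\tau\|_2+\|\bphi^2_\tau\|_2\le L$ together with AM--GM on the eigenvalues of $\bSigma_T$: since $\operatorname{tr}(\bSigma_T)=\lambda d+\sum_{\tau\in\cC_T}\|\mathbf{z}_\tau\|_2^2\le \lambda d+|\cC_T|L^2$, we obtain $\det(\bSigma_T)\le(\operatorname{tr}(\bSigma_T)/d)^d\le(\lambda+|\cC_T|L^2/d)^d$. Combining the two bounds, taking logarithms, using $\log(1+\Gamma^2)\ge \Gamma^2\log 2\ge \Gamma^2/2$ for $0<\Gamma\le 1$, and substituting $\lambda=B^{-2}$ yields the self-bounding inequality
\begin{align*}
\frac{\Gamma^2}{2}\,|\cC_T|\le d\log\!\Bigl(1+\frac{L^2B^2}{d}\,|\cC_T|\Bigr).
\end{align*}

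The main obstacle is solving this transcendental inequality to extract the claimed closed form, since it only bounds $|\cC_T|$ implicitly. I would argue via concavity: the map $g(N):=d\log(1+L^2B^2N/d)-\Gamma^2 N/2$ is concave with $g(0)=0$, so $\{N\ge0:g(N)\ge0\}$ is an interval $[0,N_0]$, and the displayed inequality says exactly $|\cC_T|\le N_0$. It therefore suffices to verify $g(N^\ast)\le 0$ at the candidate value $N^\ast=16d\Gamma^{-2}\log(3LB\Gamma^{-1})$, which forces $N_0\le N^\ast$ and hence $|\cC_T|\le N^\ast$. Plugging in $N^\ast$, the inequality $g(N^\ast)\le0$ reduces, after setting $w:=3LB/\Gamma$, to checking $w^8\ge 1+\tfrac{16}{9}w^2\log w$, which holds for all $w\ge 1$ (equality at $w=1$, with $w^8$ dominating thereafter); note $w\ge1$ is precisely the regime in which the asserted bound is nonnegative. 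Finally, $|\cC_t|\le|\cC_T|$ for every $t\le T$ since $\cC_t$ is nondecreasing in $t$, which completes the proof.
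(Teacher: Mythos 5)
Your proof is correct, arrives at exactly the stated constant, and shares the paper's core strategy, but it resolves the key steps by different means. The common core is the self-bounding inequality: the paper lower-bounds $\sum_{t\in\cC_T}\min\bigl\{1,\|\bphi_t^1-\bphi_t^2\|_{\bSigma_{t-1}^{-1}}^2\bigr\}$ by $|\cC_T|\Gamma^2$ and upper-bounds it via the elliptical potential lemma (Lemma~\ref{lm:elliptical}), yielding $\Gamma^2|\cC_T|\le 2d\log\bigl((\lambda d+|\cC_T|L^2)/(\lambda d)\bigr)$, which is the same inequality you obtain by telescoping determinants with the matrix determinant lemma; these derivations are essentially equivalent, since the potential lemma is itself proved by that telescoping, and your bound $\log(1+\Gamma^2)\ge\Gamma^2/2$ recovers the same factor of $2$. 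Where you genuinely diverge is in solving the resulting transcendental inequality. The paper invokes Lemma~\ref{lem:log2lin} (the $x\ge a\log x+b$ lemma) to show that any $|\cC_T|$ exceeding a threshold would violate the inequality, then massages $4\log(4L^2B^2/\Gamma^2)+1=8\log\bigl(2LBe^{1/8}/\Gamma\bigr)<8\log(3LB/\Gamma)$ to land on the stated constant. You instead exploit strict concavity of $g(N)=d\log(1+L^2B^2N/d)-\Gamma^2N/2$ together with $g(0)=0$, so that $\{N\ge 0: g(N)\ge 0\}$ is an interval, and verify $g(N^*)\le 0$ directly at $N^*=16d\Gamma^{-2}\log(3LB\Gamma^{-1})$, which reduces to $w^8\ge 1+\tfrac{16}{9}w^2\log w$ for $w=3LB/\Gamma\ge 1$ --- a claim that is indeed true (both sides agree at $w=1$ and the derivative of $w^8$ dominates for $w\ge 1$). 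Your route is more self-contained, using neither auxiliary lemma, and it makes transparent why the constants $16$ and $3$ suffice; the paper's is shorter because it delegates to the two cited lemmas. Both arguments finish with the same monotonicity observation $|\cC_t|\le|\cC_T|$, and both implicitly require $3LB\ge\Gamma$ (in fact $LB\ge 1$) for the bound to be meaningful, a regime restriction you are right to flag explicitly.
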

For a finite dataset $\cC_T$, the following lemma provides a upper bound for the estimation error between $\hat{\btheta}_t$ and $\btheta^*$.
\begin{lemma}\label{lem:ucb}
Suppose we have $\|\btheta^*\| \leq B$, $\|\bphi(x, y)\| \leq L/2$. Then with probability at least $1-\delta$, for each round $t\in [T]$, we have
\begin{align*}
    \|\btheta^* - \hat{\btheta}_t \|_{\Sigma_{t-1}} \leq \frac{1}{\kappa_\mu}\cdot \big(\sqrt{\lambda}B + \sqrt{2d\log (\lambda + |\cC_{T}|L^2/d\lambda \delta)}\big),
\end{align*}
\end{lemma}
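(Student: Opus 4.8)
The goal is to bound the estimation error $\|\btheta^* - \hat{\btheta}_t\|_{\bSigma_{t-1}}$ for the regularized MLE defined by the score equation~\eqref{eq:mle}. The plan is to follow the standard self-normalized concentration argument for generalized linear models, adapted to the dueling-bandit setting where the ``feature'' in round $\tau$ is the difference $\bphi^1_\tau - \bphi^2_\tau$ and the response $o_\tau$ is the binary preference.

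First I would define the empirical score function $G_t(\btheta) = \lambda\btheta + \sum_{\tau\in\cC_{t-1}}\big[\sigma(\langle\btheta,\bphi^1_\tau-\bphi^2_\tau\rangle) - \sigma(\langle\btheta^*,\bphi^1_\tau-\bphi^2_\tau\rangle)\big](\bphi^1_\tau-\bphi^2_\tau)$, so that the MLE equation~\eqref{eq:mle} can be rewritten as $G_t(\hat\btheta_t) = \epsilon_t$, where $\epsilon_t = \lambda\btheta^* + \sum_{\tau\in\cC_{t-1}}\big[o_\tau - \sigma(\langle\btheta^*,\bphi^1_\tau-\bphi^2_\tau\rangle)\big](\bphi^1_\tau-\bphi^2_\tau)$ collects the regularization term and the centered noise. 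The noise terms $o_\tau - \sigma(\langle\btheta^*,\bphi^1_\tau-\bphi^2_\tau\rangle)$ form a bounded martingale difference sequence with respect to the natural filtration, since $\EE[o_\tau \mid \cF_{\tau-1}] = \sigma(\langle\btheta^*,\bphi^1_\tau-\bphi^2_\tau\rangle)$ by the preference model.

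Next I would apply a self-normalized martingale concentration bound (the standard vector-valued Azuma/Freedman inequality of Abbasi-Yadkori-style) to control $\big\|\sum_{\tau\in\cC_{t-1}}[o_\tau - \sigma(\cdot)](\bphi^1_\tau-\bphi^2_\tau)\big\|_{\bSigma_{t-1}^{-1}}$, yielding a bound of order $\sqrt{2d\log(\lambda + |\cC_T|L^2/(d\lambda\delta))}$ after accounting for the covering/determinant term and the boundedness $\|\bphi^1_\tau-\bphi^2_\tau\|_2 \le L$. Adding the regularization contribution $\sqrt{\lambda}\,\|\btheta^*\|_{\bSigma_{t-1}^{-1}} \le \sqrt{\lambda}B$ gives the bracketed quantity on the right-hand side. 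To convert the bound on $\|\epsilon_t\|_{\bSigma_{t-1}^{-1}}$ into a bound on $\|\hat\btheta_t-\btheta^*\|_{\bSigma_{t-1}}$, I would use the strong convexity / monotonicity of the log-likelihood: by the mean-value theorem applied coordinate-wise, $G_t(\hat\btheta_t) - G_t(\btheta^*) = \big(\lambda\Ib + \sum_{\tau}\dot\sigma(\bar\xi_\tau)(\bphi^1_\tau-\bphi^2_\tau)(\bphi^1_\tau-\bphi^2_\tau)^\top\big)(\hat\btheta_t-\btheta^*)$ for some intermediate points, and Assumption~\ref{assumption:differ} gives $\dot\sigma \ge \kappa_\sigma$, so this Hessian-like matrix dominates $\kappa_\sigma\bSigma_{t-1}$. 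This yields $\kappa_\sigma\|\hat\btheta_t-\btheta^*\|_{\bSigma_{t-1}} \le \|G_t(\hat\btheta_t)-G_t(\btheta^*)\|_{\bSigma_{t-1}^{-1}} = \|\epsilon_t\|_{\bSigma_{t-1}^{-1}}$, which after dividing by $\kappa_\sigma$ (written $\kappa_\mu$ in the statement) produces exactly the claimed inequality.

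The main obstacle is the concentration step: one must verify that the self-normalized bound holds uniformly over all $t\in[T]$ even though the query set $\cC_{t-1}$ grows adaptively and only a data-dependent subset of rounds enters $\bSigma_{t-1}$. Since $\bSigma_{t-1}$ is built only from the queried rounds and the queries are triggered by an uncertainty criterion that itself depends on past observations, I would need to ensure the stopping-time/filtration structure is handled correctly so that the martingale property is preserved when restricting to $\tau\in\cC_{t-1}$; the self-normalized bound is robust to this because it applies to any stopping time and the determinant potential is monotone in the accumulated features. The remaining steps—the mean-value expansion and the algebraic manipulation via $\kappa_\sigma$—are routine once strong convexity is established.
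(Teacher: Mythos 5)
Your proposal takes essentially the same route as the paper's own proof: the same score function $G_t$, the same use of the MLE equation to reduce $G_t(\hat{\btheta}_t)$ to the centered noise sum, the same mean-value expansion combined with $\dot{\sigma} \geq \kappa_\sigma$ to lower-bound the Hessian-like matrix by $\kappa_\sigma \bSigma_{t-1}$, and the same Abbasi-Yadkori self-normalized concentration plus determinant bound, with the regularization contribution $\sqrt{\lambda}B$ handled by the triangle inequality. The only discrepancy is a harmless bookkeeping slip: correctly one has $G_t(\hat{\btheta}_t) = Z_t$ (the pure noise sum) and hence $G_t(\hat{\btheta}_t) - G_t(\btheta^*) = Z_t - \lambda\btheta^*$, rather than your $\epsilon_t = Z_t + \lambda\btheta^*$, but since the $\lambda\btheta^*$ term is absorbed by the triangle inequality regardless of sign, the final bound is unaffected.
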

Based on Lemmas \ref{lem:label-cplx} and \ref{lem:ucb}, the following  auxiliary lemma proposes a proper choice for the uncertainty threshold $\Gamma$ and confidence radius $\beta$ in Algorithm \ref{algo:po}.
\begin{lemma}\label{lem:beta-gamma}
If we set the  uncertainty threshold $\Gamma = \kappa_{\mu}\Delta/(2d\iota_1)$ and confidence radius $\beta = \kappa_{\mu}^{-1}( 1 + 4\sqrt{d\iota_2} + \sqrt{2d\iota_3})$, where $\iota_1 = 42\log(126LB\sqrt{d}\Delta^{-1}\kappa_{\mu}^{-1}) + \sqrt{8\log(1/\delta)}$, $\iota_2 = \log(3LB\Gamma^{-1})$ and $\iota_3 = \log\big((1 + 16L^2B^2\Gamma^{-2}\iota_2)/\delta\big)$, then we have $2\beta \Gamma < \Delta$ and
\begin{align*}
    \beta \ge \frac{1}{\kappa_\mu}\cdot \big(\sqrt{\lambda}B + \sqrt{2d\log (\lambda + |\cC_{T}|L^2/d\lambda \delta)}\big).
\end{align*}
\end{lemma}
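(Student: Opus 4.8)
The statement comprises two inequalities, and I would verify each by direct substitution followed by a careful accounting of the nested logarithmic factors. Throughout I would use the standing facts $B\ge 1$, $\delta\le 1$, $\kappa_\mu\le 1$, $d\ge 1$ and $\Delta\le 1$, which in particular make every argument of the logarithms at least $e$, so that the $\iota_i$ are positive and at least $1$.

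For the estimation inequality, I would plug $\lambda=B^{-2}$ into the bound of Lemma~\ref{lem:ucb}, so that $\sqrt{\lambda}B=1$, and then insert the cardinality bound $|\cC_T|\le 16d\Gamma^{-2}\iota_2$ from Lemma~\ref{lem:label-cplx}. The argument of the remaining logarithm becomes $\lambda+|\cC_T|L^2/(d\lambda\delta)\le B^{-2}+16L^2B^2\Gamma^{-2}\iota_2/\delta$, which is at most $(1+16L^2B^2\Gamma^{-2}\iota_2)/\delta=e^{\iota_3}$ because $B^{-2}\le 1\le 1/\delta$. Hence the right-hand side of Lemma~\ref{lem:ucb} is bounded by $\kappa_\mu^{-1}(1+\sqrt{2d\iota_3})$, which is dominated by $\beta=\kappa_\mu^{-1}(1+4\sqrt{d\iota_2}+\sqrt{2d\iota_3})$; the extra term $4\sqrt{d\iota_2}$ simply provides slack.

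For the product inequality $2\beta\Gamma<\Delta$, substituting the definitions of $\beta$ and $\Gamma$ gives $2\beta\Gamma=\frac{\Delta}{d\iota_1}\big(1+4\sqrt{d\iota_2}+\sqrt{2d\iota_3}\big)$, so it suffices to prove $1+4\sqrt{d\iota_2}+\sqrt{2d\iota_3}<d\iota_1$; dividing through by $\sqrt{d}$ and using $d\ge 1$, it is enough to show $1+4\sqrt{\iota_2}+\sqrt{2}\sqrt{\iota_3}<\iota_1$. The work here is to control $\iota_2$ and $\iota_3$, which are themselves logarithms of quantities involving $\Gamma^{-1}=2d\iota_1/(\kappa_\mu\Delta)$ and hence depend on $\iota_1$. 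Writing $\iota_2=\log\big(6LBd\iota_1/(\kappa_\mu\Delta)\big)$, I would split off the piece $\log\big(126LB\sqrt{d}/(\Delta\kappa_\mu)\big)$ appearing in $\iota_1$ and bound the residual $\log\iota_1$ by $\iota_1/10$ (valid since $\iota_1\ge 42$, where $\log x/x$ is decreasing); an analogous expansion of $\iota_3=\log\big((1+64L^2B^2d^2\iota_1^2\iota_2/(\kappa_\mu^2\Delta^2))/\delta\big)$ shows it is the logarithm of a polynomial in all the problem parameters plus a $\log(1/\delta)$ term. Because $\iota_1$ carries the large constant factor $42$ on its main logarithmic term together with an additive $\sqrt{8\log(1/\delta)}$, both dominate the corresponding sub-logarithmic and $\delta$-dependent contributions on the left, and the strict inequality follows.

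The main obstacle is the self-referential nesting: $\Gamma$ is defined through $\iota_1$, while $\iota_2$ and $\iota_3$ are logarithms of expressions containing $\Gamma$ (hence $\iota_1$) and moreover $\iota_3$ contains $\iota_2$. Untangling this requires the a priori bound $\iota_1\ge 42$ together with the fact that $\log\iota_1\ll\iota_1$, and then checking that the hand-tuned constants—the factor $42$ and the constant $126$ inside the main logarithm, and the additive $\sqrt{8\log(1/\delta)}$ chosen precisely to absorb the $\delta$-dependence entering through $\iota_3$—are large enough. Once the logarithms are expanded this is a careful but essentially mechanical numerical verification.
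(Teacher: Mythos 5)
Your proposal is correct and takes essentially the same route as the paper: the first inequality by plugging $\lambda = B^{-2}$ and the query bound $|\cC_T| \le 16d\Gamma^{-2}\iota_2$ from Lemma~\ref{lem:label-cplx} into Lemma~\ref{lem:ucb} (with the $4\sqrt{d\iota_2}$ term as slack), and the second by reducing via $d\ge 1$ to $1 + 4\sqrt{\iota_2} + \sqrt{2\iota_3} < \iota_1$ and then untangling the self-referential nesting using $\log\iota_1 \ll \iota_1$. The only cosmetic difference is that the paper resolves the self-reference by invoking Lemma~\ref{lem:log2lin} (i.e., $x \ge 4a\log(2a)+2b$ implies $x \ge a\log x + b$) after collecting everything into the bound $9\iota_2 + 2\sqrt{\log(1/\delta)}$, whereas you use the direct estimate $\log\iota_1 \le \iota_1/10$ for $\iota_1 \ge 42$; both are standard and equivalent ways of closing the same loop.
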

With these parameters, we now define the event $\cE_1$ as 
 \begin{equation*}
    \event_1 = \{ \forall t \in [T],  \|\hat{\btheta}_t - \btheta^* \|_{\bSigma_{t-1}^{-1}} \leq \beta \}.
\end{equation*}
According to Lemma \ref{lem:ucb} and Lemma \ref{lem:beta-gamma}, we have $\Pr(\cE_1)\ge 1-\delta$. Conditioned on the event $\cE_1$, the following lemma suggests that our estimated discrepancy is no less than the actual discrepancy.
\begin{lemma}\label{optimistic}
    On the event $\cE_1$, for each round $t\in[T]$, context $x\in \cX$ and any action $y\in \cA$,
    the estimated discrepancy $\hat{D}_t(x,y)$ satisfied
    \begin{align*}
        \hat{D}_t(x,y)\ge D_t(x,y)=\langle \btheta^*, \bphi(x,y) - \bphi_t^2 \rangle.
    \end{align*}
    On the other hand, we have
    \begin{align*}
        \hat{D}_t(x,y)\leq D_t(x,y) + 2 \beta \|\bphi(x,y) - \bphi^2_t\|_{\bSigma_{t-1}^{-1}}.
    \end{align*}
\end{lemma}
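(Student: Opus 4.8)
The statement to prove is Lemma~\ref{optimistic}, which establishes two-sided control on the optimistic estimator $\hat{D}_t(x,y)$ relative to the true discrepancy $D_t(x,y) = \langle \btheta^*, \bphi(x,y) - \bphi_t^2\rangle$, conditioned on the good event $\cE_1$.

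\medskip

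The plan is to treat the two inequalities separately and exploit the definition $\hat{D}_t(x,y) = \min\{\langle \hat{\btheta}_t, \bphi(x,y) - \bphi_t^2\rangle + \beta\|\bphi(x,y) - \bphi_t^2\|_{\bSigma_{t-1}^{-1}}, 1\}$ together with a Cauchy--Schwarz argument. First I would prove the lower bound $\hat{D}_t(x,y) \ge D_t(x,y)$. The key observation is that the truncation at $1$ never loses the lower bound: since the true reward lies in $[0,1]$ by Assumption~\ref{ass:setup}, we have $D_t(x,y) = r(x,y) - r(x,y_t^2) \le 1$, so whenever the $\min$ selects the value $1$ we still have $\hat{D}_t(x,y) = 1 \ge D_t(x,y)$. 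For the untruncated case, it suffices to show $\langle \hat{\btheta}_t, \bphi(x,y) - \bphi_t^2\rangle + \beta\|\bphi(x,y) - \bphi_t^2\|_{\bSigma_{t-1}^{-1}} \ge \langle \btheta^*, \bphi(x,y) - \bphi_t^2\rangle$. Writing $\Delta\bphi := \bphi(x,y) - \bphi_t^2$, this reduces to $\langle \btheta^* - \hat{\btheta}_t, \Delta\bphi\rangle \le \beta\|\Delta\bphi\|_{\bSigma_{t-1}^{-1}}$, which follows from Cauchy--Schwarz in the $\bSigma_{t-1}$ geometry: $\langle \btheta^* - \hat{\btheta}_t, \Delta\bphi\rangle \le \|\btheta^* - \hat{\btheta}_t\|_{\bSigma_{t-1}}\|\Delta\bphi\|_{\bSigma_{t-1}^{-1}} \le \beta\|\Delta\bphi\|_{\bSigma_{t-1}^{-1}}$, where the last step invokes the event $\cE_1$.

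\medskip

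For the upper bound, I would argue by cases on the $\min$. If the truncation is active, $\hat{D}_t(x,y) = 1$; but since $\cE_1$ gives $\langle \hat{\btheta}_t, \Delta\bphi\rangle + \beta\|\Delta\bphi\|_{\bSigma_{t-1}^{-1}} \ge 1$ in this case, and the same Cauchy--Schwarz bound yields $\langle \hat{\btheta}_t, \Delta\bphi\rangle \le \langle \btheta^*, \Delta\bphi\rangle + \beta\|\Delta\bphi\|_{\bSigma_{t-1}^{-1}} = D_t(x,y) + \beta\|\Delta\bphi\|_{\bSigma_{t-1}^{-1}}$, one still concludes $\hat{D}_t(x,y) = 1 \le \langle \hat{\btheta}_t, \Delta\bphi\rangle + \beta\|\Delta\bphi\|_{\bSigma_{t-1}^{-1}} \le D_t(x,y) + 2\beta\|\Delta\bphi\|_{\bSigma_{t-1}^{-1}}$. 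If the truncation is inactive, then $\hat{D}_t(x,y) = \langle \hat{\btheta}_t, \Delta\bphi\rangle + \beta\|\Delta\bphi\|_{\bSigma_{t-1}^{-1}}$ directly, and the same chain of inequalities gives the claim. The core estimate in both directions is the single Cauchy--Schwarz step bounding $|\langle \btheta^* - \hat{\btheta}_t, \Delta\bphi\rangle|$ by $\beta\|\Delta\bphi\|_{\bSigma_{t-1}^{-1}}$ on $\cE_1$.

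\medskip

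I do not anticipate a serious obstacle here; this is a standard optimism argument and the main subtlety is purely bookkeeping around the truncation operator, specifically making sure the $\min$ does not break either inequality. The only point requiring mild care is confirming that $D_t(x,y) \le 1$ so that truncation at $1$ is harmless for the lower bound — this is where Assumption~\ref{ass:setup} (rewards bounded in $[0,1]$) enters — and noting that the extra additive slack of $2\beta\|\Delta\bphi\|_{\bSigma_{t-1}^{-1}}$ in the upper bound comfortably absorbs both the $\beta$-bonus and the estimation error from $\cE_1$.
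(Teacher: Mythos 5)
Your proof is correct and follows essentially the same route as the paper's: one Cauchy--Schwarz step giving $|\langle \hat{\btheta}_t - \btheta^*, \bphi(x,y)-\bphi_t^2\rangle| \le \|\hat{\btheta}_t - \btheta^*\|_{\bSigma_{t-1}}\,\|\bphi(x,y)-\bphi_t^2\|_{\bSigma_{t-1}^{-1}} \le \beta\,\|\bphi(x,y)-\bphi_t^2\|_{\bSigma_{t-1}^{-1}}$ on the event $\cE_1$, plus the bookkeeping that the truncation harms neither inequality because $D_t(x,y)$ never exceeds the truncation level while $\hat{D}_t(x,y)$ never exceeds the untruncated value. If anything, your treatment of the truncation is tidier than the paper's own, which in the proof silently replaces the algorithm's $\min\{\cdot,1\}$ by $\min\{\cdot,2\}$ and uses $D_t(x,y)\le 2$, whereas you justify the threshold $1$ directly from the reward bound in Assumption~\ref{ass:setup}.
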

It is worth to notice that in Algorithm \ref{algo:po} (Line \ref{line:oppo}), we update the policy $\pi_t$ with online mirror descent and the following lemma provides the regret guarantee for this process.
\begin{lemma}[Modified from Lemma 6.2, \citealp{he2022near}]\label{lem:omd}
For any estimated value function $\hat{D}_t(x, \cdot)$, if we update the policy $\pi_{t+1}(\cdot | x)$ by the exponential rule: 
\begin{equation}\label{eq:one-step-descent}
    \pi_{t+1}(\cdot | x) \propto \pi_{t}(\cdot | x)\cdot \exp\big(\eta \hat{D}_t(x, \cdot)\big),
\end{equation}
then the expected sub-optimality gap at round $T$ can be upper bounded as follows:
\begin{align*}
    & \EE_{x\sim \cD,y \sim \pi^*(\cdot|x)}[\hat{D}_t(x, y)] - \EE_{x\sim \cD,y \sim \pi_t(\cdot|x)}[\hat{D}_t(x, y)] \\
    & \quad \leq 2\eta + \eta^{-1}\EE_{x\sim \cD}\Big[ \text{KL}\big(\pi^*(\cdot | x)\|\pi_t(\cdot | x)\big) -  \text{KL}\big(\pi^*(\cdot | x)\|\pi_{t+1} (\cdot | x)\big) \Big] 
\end{align*}
\end{lemma}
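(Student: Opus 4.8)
The statement is the standard one-step regret guarantee for the exponential-weights (online mirror descent) update, so the plan is to prove the inequality \emph{pointwise} for each fixed context $x$ and then integrate against $\cD$. I would fix $x\in\cX$ and abbreviate $p=\pi^*(\cdot\mid x)$, $q=\pi_t(\cdot\mid x)$, $q'=\pi_{t+1}(\cdot\mid x)$, and $f(\cdot)=\hat D_t(x,\cdot)$. By the update rule \eqref{eq:one-step-descent}, $q'(y)=q(y)\exp(\eta f(y))/Z$ with normalizing constant $Z=\sum_{y\in\cA}q(y)\exp(\eta f(y))=\EE_{y\sim q}[\exp(\eta f(y))]$.

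First I would establish the exact identity for the change in relative entropy. Writing out the KL definitions and substituting the update rule, the $\log q(y)$ terms and the constant-in-$y$ factor $\log Z$ separate cleanly, giving
\begin{align*}
\text{KL}(p\,\|\,q)-\text{KL}(p\,\|\,q') = \sum_{y\in\cA}p(y)\log\frac{q'(y)}{q(y)} = \eta\,\EE_{y\sim p}[f(y)] - \log Z.
\end{align*}
This step is exact and requires no approximation.

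The substantive step is to upper bound the log-partition term $\log Z$. Here I would use boundedness of the estimated gap: by the truncation in the definition of $\hat D_t$ we have $f(y)\le 1$, and since the parameter choice guarantees $\eta\le 1$, we get $\eta f(y)\le 1$ for every $y$, so the elementary inequality $e^{z}\le 1+z+z^{2}$ (valid for all $z\le 1$) applies termwise: $\exp(\eta f(y))\le 1+\eta f(y)+\eta^2 f(y)^2$. Taking the expectation under $q$, using $\EE_{y\sim q}[f(y)^2]\le 1$ and $\log(1+u)\le u$, yields $\log Z \le \eta\,\EE_{y\sim q}[f(y)]+\eta^{2}$. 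Substituting into the identity and dividing by $\eta>0$ gives, for this fixed $x$,
\begin{align*}
\EE_{y\sim\pi^*(\cdot\mid x)}[\hat D_t(x,y)]-\EE_{y\sim\pi_t(\cdot\mid x)}[\hat D_t(x,y)] \le \eta + \eta^{-1}\big(\text{KL}(p\,\|\,q)-\text{KL}(p\,\|\,q')\big),
\end{align*}
and bounding $\eta\le 2\eta$ matches the stated constant (the slack absorbs any looser constant one might incur in the MGF-type inequality). Finally, since this holds for every $x$, I would take $\EE_{x\sim\cD}$ on both sides; linearity and monotonicity of expectation then give exactly the claimed bound, with the KL terms appearing inside $\EE_{x\sim\cD}[\,\cdot\,]$.

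\textbf{The main obstacle} is the log-partition bound, specifically controlling the second moment $\EE_{y\sim q}[f(y)^2]$ so that it stays $O(1)$ rather than blowing up. The one-sided truncation $\hat D_t\le 1$ alone is not enough: for strongly negative $f(y)$ the term $e^{\eta f(y)}$ is harmless but its quadratic surrogate $\eta^2 f(y)^2$ is not, so one needs a \emph{two-sided} bound $|\hat D_t|\le 1$ (i.e.\ treating $\hat D_t$ as clipped to the reward range, consistent with rewards lying in $[0,1]$) to keep $\log Z$ of order $\eta^2$. Once this boundedness is in hand, every remaining step is the routine exponential-weights calculation described above.
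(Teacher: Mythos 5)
Your proposal is correct, but it takes a genuinely different route from the paper. The paper's proof compares $\pi^*$ against the \emph{updated} policy $\pi_{t+1}$: it derives the exact three-term identity $\sum_{y}\eta\hat{D}_t(x,y)\big(\pi^*(y|x)-\pi_{t+1}(y|x)\big)=\text{KL}(\pi^*\|\pi_t)-\text{KL}(\pi^*\|\pi_{t+1})-\text{KL}(\pi_{t+1}\|\pi_t)$, then controls the residual term $\sum_y \hat{D}_t(x,y)\big(\pi_{t+1}(y|x)-\pi_t(y|x)\big)\leq 2\|\pi_{t+1}(\cdot|x)-\pi_t(\cdot|x)\|_1$ using boundedness of $\hat{D}_t$, and finally combines Pinsker's inequality with the elementary bound $ax-bx^2\leq a^2/(4b)$ to absorb everything into $2\eta$. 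You instead run the classical log-partition (exponential-weights) analysis: the exact identity $\text{KL}(p\|q)-\text{KL}(p\|q')=\eta\,\EE_{y\sim p}[f(y)]-\log Z$ followed by a second-order bound on $\log Z$. Both arguments are standard and both yield the claim after taking $\EE_{x\sim\cD}$; yours gives the slightly sharper constant $\eta$ in place of $2\eta$, while the paper's route has the advantage of working for every $\eta>0$, whereas your use of $e^{z}\leq 1+z+z^{2}$ imposes $\eta\max_y f(y)\leq 1$ (harmless here since the algorithm's $\eta$ is small, but a genuine restriction in the lemma as stated; substituting Hoeffding's lemma $\log\EE_{y\sim q}[e^{\eta f(y)}]\leq\eta\,\EE_{y\sim q}[f(y)]+\eta^2 R^2/8$ for range $R$ would remove it and still match $2\eta$ for $R\leq 4$). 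The obstacle you flag — that the algorithm's truncation of $\hat{D}_t$ is one-sided, so two-sided boundedness must come from elsewhere — is real, but it is equally present in the paper's own proof, whose last step invokes $0\leq\hat{D}_t(x,y)\leq 2$ even though the algorithm only clips from above; in both cases the lower bound is supplied by optimism (Lemma \ref{optimistic}, $\hat{D}_t\geq D_t\geq -2$) on the concentration event, so your proof is no worse off on this point than the original.
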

With the help of these lemmas, we are now ready to prove our main theorem.
\begin{proof}[Proof of Theorem~\ref{thm:main}]
Now we start the regret analysis. For simplicity, for each round $t\in[T]$, we use $\bphi_t$ to denote $\bphi(x_t, y_t)$. Initially, the episodes and their corresponding regret can be decomposed into two groups based on whether episode $t$ is added to the dataset $\mathcal{C}_T$:
\begin{align}
    \text{Regret}(T) &=\sum_{t=1}^T \langle \btheta^*, \bphi_t^* \rangle - \langle \btheta^*, \bphi_t^1 \rangle\notag\\
    &= \sum_{t=1}^T D_t(x_t, y_t^*) - D_t(x_t, y^1_t)\notag\\
    &= \underbrace{\sum_{t \in \cC_T}  D_t(x_t, y_t^*) - D_t(x_t, y^1_t)}_{I_1} +  \underbrace{\sum_{t \notin \cC_T}  D_t(x_t, y_t^*) - D_t(x_t, y^1_t)}_{I_2} \label{eq:0001}
\end{align}
where $D_t(x, y) = \langle \btheta^*, \bphi(x,y)-\bphi_t^2 \rangle$ denotes the reward gap between action $y \in \cA$ and selected action $y_t^2$ at round $t$.

Now, we bound this two term separately. For the term $I_1$, we have
\begin{align}
   I_1 &= \sum_{t \in \cC_T}  D_t(x_t, y_t^*) - D_t(x_t, y^1_t)\notag\\
   &\leq  \underbrace{\sum_{t \in \cC_T} \hat{D}_t(x_t, y_t^1) - D_t(x_t, y_t^1)}_{J_1} + \sum_{t \in \cC_T} \hat{D}_t(x_t, y_t^*) - \hat{D}_t(x_t, y^1_t)\notag\\
    &= J_1+\underbrace{\sum_{t \in \cC_T}  \EE_{x_t\sim \cD,y \sim \pi^*(\cdot|x)}[\hat D_t(x_t, y)] - \EE_{x_t\sim \cD,y \sim \pi_t(\cdot|x)}[\hat D_t(x_t, y)] }_{J_2}\notag\\
    &\qquad +\underbrace{\sum_{t \in \cC_T} \hat{D}_t(x_t, y_t^*) - \hat D_t(x_t, y_t^1) - \sum_{t \in \cC_T}  \EE_{x_t\sim \cD,y \sim \pi^*(\cdot|x)}[\hat D_t(x_t, y)] - \EE_{x_t\sim \cD,y \sim \pi_t(\cdot|x)}[\hat D_t(x_t, y_t)]}_{J_3},\label{eq:0002}
\end{align}
where the inequality holds due to Lemma~\ref{optimistic}.

For the term $J_1$, we have
\begin{align}\label{eq:elliptical}
    J_1 & = \sum_{t \in \cC_T}  \hat D_t(x_t, y_t^1) - D_t(x_t, y^1_t)\notag\\
    &\leq \sum_{t \in \cC_T}  \min \{4, 2\beta\|\bphi_t^1-\bphi_t^2\|_{\bSigma_{t-1}^{-1}} \} \notag \\
    &\leq 4\beta \sqrt{|\cC_T|\cdot \sum_{t \in \cC_T}  \min \{1, \|\bphi_t^1-\bphi_t^2\|_{\bSigma_{t-1}^{-1}}^2 \}}\notag\\
    & \leq 8\beta \sqrt{|\cC_T| d\log \bigg(\frac{\lambda d + |\cC_T|L^2}{\lambda d}}\bigg),
\end{align}
where the first inequality holds due to Lemma \ref{optimistic} with the fact that $-2\leq D_t(x_y,y_t^1)\leq 2$, the second inequality holds due to Cauchy–Schwarz inequality and the last inequality holds due to the elliptical potential lemma (Lemma \ref{Lemma:abba}).

 The term $J_2$ reflects the sub-optimality from the online mirror descent process and can be upper bounded by Lemma \ref{lem:omd}. For simplicity, we denote $\cC_T = \{ t_1, ...,t_K \}$ where $K = |\cC_T|$. Thus, we have
\begin{align}\label{eq:omd}
    J_2 &= \sum_{k=1}^K  \EE_{x_{t_k}\sim \cD,y \sim \pi^*(\cdot|x)}[\hat D_{t_k}(x_{t_k}, y)] - \EE_{x_{t_k}\sim \cD,y \sim \pi_{t_k}(\cdot|x)}[\hat D_{t_k}(x_{t_k}, y)] \notag \\
    & \leq \sum_{k=1}^K \big(2\eta + \eta^{-1}\EE_{x\sim \cD}\big[ \text{KL}(\pi^*(\cdot | x)\|\pi_{t_k}(\cdot | x)) -  \text{KL}(\pi^*(\cdot | x)\|\pi_{t_{k}+1} (\cdot | x) \big] \big) \notag \\
    & = 2\eta K + \eta^{-1}\EE_{x\sim \cD}\big[ \text{KL}(\pi^*(\cdot | x)\|\pi_1(\cdot | x)) -  \text{KL}(\pi^*(\cdot | x)\|\pi_{t_{K}+1} (\cdot | x) \big] \notag \\
    &\leq 2\eta K + \eta^{-1}\EE_{x\sim \cD}\big[\text{KL}(\pi^*(\cdot | x)\|\pi_1(\cdot | x))\big] \notag\\
    & \leq 2\sqrt{32d\Gamma^{-2}\log(3LB\Gamma^{-1})\log|\cA|},
\end{align}
where the first inequality holds due to Lemma~\ref{lem:omd}, the second equation holds due to policy $\pi$ keeps unchanged for $t\in \cC_T$, the second inequality holds due to $\text{KL}(\cdot\|\cdot)\ge 0$
and the last inequality holds due to $\eta = \sqrt{\Gamma^2\log \cA / \big(32d\log(3LB\Gamma^{-1})\big)}$ with the fact that $\pi_1$ is uniform policy.

According to Azuma-Hoeffding's inequality (Lemma \ref{lemma:azuma}), with probability at least $1-\delta$, the term $J_3$ can be upper bounded by
\begin{align}
    J_3 \leq 2\sqrt{2|\cC_T| \log (1/\delta)}.\label{eq:0003}
\end{align}
Substituting \eqref{eq:elliptical}, \eqref{eq:omd} and \eqref{eq:0003} into \eqref{eq:0002}, we have
\begin{align}
    I_1& =J_1+J_2+J_3 \\
    &\leq 8\beta \sqrt{|\cC_T| d\log \bigg(\frac{\lambda d + |\cC_T|L^2}{\lambda d}}\bigg)+ 2\sqrt{32d\Gamma^{-2}\log(3LB\Gamma^{-1})\log|\cA|} +  2\sqrt{2|\cC_T| \log (1/\delta)}\notag\\
    &\leq \tilde O(\beta d/\Gamma)\notag\\
    &=\tilde O\bigg(\frac{d^2}{\Delta}\bigg).\label{eq:0004}
\end{align}
where the last inequality holds due to Lemma \ref{lem:label-cplx}.

Now, we only need to focus on the term $I_2$. For each round $t \notin \cC_T$, we have
\begin{align*}
    D_t(x_t, y_t^*) - D_t(x_t, y_t^1) & = \langle \btheta^* - \hat{\btheta}_t, \bphi_t^* - \bphi_t^2 \rangle + \langle \hat{\btheta}_t, \bphi_t^* - \bphi_t^2 \rangle -  \langle \btheta^*, \bphi_t^1 - \bphi_t^2 \rangle \\
    & \leq \beta\|\bphi_t^*-\bphi_t^2\|_{\bSigma_{t-1}^{-1}} +  \langle \hat{\btheta}_t, \bphi_t^* - \bphi_t^2 \rangle -  \langle \btheta^*, \bphi_t^1 - \bphi_t^2 \rangle \\
    & \leq \beta\|\bphi_t^1-\bphi_t^2\|_{\bSigma_{t-1}^{-1}} +  \langle \hat{\btheta}_t, \bphi_t^1 - \bphi_t^2 \rangle -  \langle \btheta^*, \bphi_t^1 - \bphi_t^2 \rangle \\
    & \leq 2\beta\|\bphi_t^1-\bphi_t^2\|_{\bSigma_{t-1}^{-1}},
\end{align*}
where the first inequality holds due to Lemma \ref{optimistic}, the second inequality holds due to the selection rule of action $\bphi_t^1$ and the last inequality holds due to Lemma \ref{optimistic}. According to the definition of set $\cC_T$ in Algorithm \ref{algo:po}, for each round $t \notin \cC_T$, we have $\|\bphi_t^1-\bphi_t\|_{\bSigma_{t-1}^{-1}} \leq \Gamma$. Therefore, the sub-optimality gap at round $t$ is upper bounded by
\begin{equation*}
    2\beta\|\bphi_t^1-\bphi_t\|_{\bSigma_{t-1}^{-1}} \leq 2 \beta \Gamma < \Delta,
\end{equation*}
where the second inequality holds due to Lemma \ref{lem:beta-gamma}. According to the minimal sub-optimality assumption (Assumption \ref{assumption:gap}), this indicates that the regret yielded in round $t\notin \cC_T$ is 0. Summing up over $t \notin \cC_T$, we have
\begin{align}
    I_2=\sum_{t\in \cT_t} D_t(x_t, y_t^*) - D_t(x_t, y_t^1) =0.\label{eq:0005}
\end{align}
Combining the results in \eqref{eq:0004} and \eqref{eq:0005}, we complete the proof of Theorem \ref{thm:main}.
\end{proof}

\section{Proof of Lemmas in Appendix \ref{app:proof-of-main}}

In this section, we provide the proofs of the lemmas in Appendix~\ref{app:proof-of-main}.

\subsection{Proof of Lemma~\ref{lem:label-cplx}}

\begin{proof}[Proof of Lemma \ref{lem:label-cplx}]
The proof follows the proof in \citet{zhang2023interplay}. Here we fix the round $t$ to be $T$ in the proof and only provide the upper bound of $\cC_T$ due to the fact that $\cC_t$ is monotonically increasing w.r.t. the round $t$. For all selected episode $t \in \cC_T$, since we have $\|\bphi_t^1 - \bphi_t^2\|_{\bSigma_{t-1}^{-1}} \ge \Gamma$, the summation of the bonuses over all the selected episode $t \in \cC_T$ is lower bounded by
\begin{align}
    \sum_{t \in \cC_T} \min\left\{1, \|\bphi_t^1 - \bphi_t^2\|_{\bSigma_{t-1}^{-1}}^2\right\} \ge |\cC_T|\min\{1, \Gamma^2\} = |\cC_T|\Gamma^2,\label{eq:lowerbounds}
\end{align}
where the last equation holds due to $0 \leq \Gamma \leq 1$. On the other hand, according to Lemma~\ref{lm:elliptical}, the summation is upper bounded by:
\begin{align}
    \sum_{t \in \cC_T} \min\left\{1, \|\bphi_t^1 - \bphi_t^2\|_{\bSigma_{t-1}^{-1}}^2\right\} \le 2d \log\left(\frac{\lambda d + |\cC_T|L^2}{\lambda d}\right)\label{eq:upperbounds}.
\end{align}
Combining~\eqref{eq:lowerbounds} and~\eqref{eq:upperbounds}, we know that the total number of the selected data points $|\cC_T|$ satisfies the following inequality:
\begin{align*}
    \Gamma^2|\cC_T| \le 2d \log\left(\frac{\lambda d + |\cC_T|L^2}{\lambda d}\right).
\end{align*}
For simplicity, we reorganized the result as follows:
\begin{align}
    \frac{\Gamma^2|\cC_T|}{2d} \le \log\left(1 + \frac{2L^2}{\Gamma^2\lambda}\frac{\Gamma^2|\cC_T|}{2d}\right).\label{eq:thres}
\end{align}
Notice that $\lambda = B^{-2}$ and $2L^2B^2 \ge 2 \ge \Gamma^2$, therefore, if $|\cC_T|$ is too large such that 
\begin{align*}
    \frac{\Gamma^2 |\cC_T|}{2d} > 4\log\left(\frac{4L^2B^2}{\Gamma^2}\right) + 1 \ge 4\log\left(\frac{4L^2B^2}{\Gamma^2}\right) + \frac{\Gamma^2}{2L^2B^2},
\end{align*}
then according to Lemma~\ref{lem:log2lin}, \eqref{eq:thres} will not hold. Thus the necessary condition for \eqref{eq:thres} to hold is:
\begin{align*}
    \frac{\Gamma^2 |\cC_T|}{2d} \le 4\log\left(\frac{4L^2B^2}{\Gamma^2}\right) + 1 = 8\log\left(\frac{2LB}{\Gamma}\right) + \log(e) =  8\log\left(\frac{2LBe^{\frac18}}{\Gamma}\right) < 8\log\left(\frac{3LB}{\Gamma}\right).
\end{align*}
Applying basic calculus, we obtain the claimed bound for $|\cC_T|$ and thus complete the proof of Lemma~\ref{lem:label-cplx}.
\end{proof}

\subsection{Proof of Lemma \ref{lem:ucb}}

\begin{proof}[Proof of Lemma \ref{lem:ucb}]
This proof follows the proof in \citet{di2023variance}. For each round $t \in [T]$, we define the following auxiliary quantities:
\begin{align*}
&G_t(\btheta) = \lambda \btheta + \sum_{\tau \in \cC_{t-1}} \Big[\mu\big((\bphi_\tau^1-\bphi_\tau^2)^\top \btheta\big) - \mu\big((\bphi_\tau^1-\bphi_\tau^2)^\top \btheta^*\big)\Big](\bphi_\tau^1-\bphi_\tau^2)\\
& \epsilon_{t} = o_t - \mu\big((\bphi_t^1-\bphi_t^2)^\top \btheta^*\big)\\
& Z_t = \sum_{\tau \in \cC_{t-1}} \epsilon_\tau (\bphi_\tau^1-\bphi_\tau^2).
\end{align*}
By defining $\hat{\btheta}_t$ as the solution to~\eqref{eq:mle}, we plug the equation into the definition of $G_t$ and we have
\begin{align*}
    G_t (\hat{\btheta}_t) &= \lambda \hat{\btheta}_t  + \sum_{\tau \in \cC_{t-1}} \Big[\mu\big((\bphi_\tau^1-\bphi_\tau^2)^\top \hat{\btheta}_t) - o_\tau + o_\tau - \mu\big((\bphi_\tau^1-\bphi_\tau^2)^\top \btheta^*\big)\Big](\bphi_\tau^1-\bphi_\tau^2)\\
    & = \lambda \hat{\btheta}_t  + \sum_{\tau \in \cC_{t-1}} \Big[\mu\big((\bphi_\tau^1-\bphi_\tau^2)^\top \hat{\btheta}_t) - o_\tau \Big](\bphi_\tau^1-\bphi_\tau^2) \\
    & \quad + \sum_{\tau \in \cC_{t-1}} \Big[ o_\tau - \mu\big((\bphi_\tau^1-\bphi_\tau^2)^\top \btheta^*\big)\Big](\bphi_\tau^1-\bphi_\tau^2)\\
    & = Z_t.
\end{align*}
Therefore, we have that
\begin{equation*}
    G_t(\hat{\btheta}_t) - G_t(\btheta^*) =Z_t- G_t(\btheta^*) = Z_t - \lambda \btheta^*.
\end{equation*}
On the other hand, applying Taylor's expansion, there exists $\alpha\in[0,1]$ and $\tilde \btheta_t= \alpha \hat{\btheta}_t +(1-\alpha )\btheta^*$, such that the following equation holds:
\begin{align*}
    G_t(\hat{\btheta}_t) - G_t(\btheta^*) &= \lambda (\hat{\btheta}_t - \btheta^*) + \sum_{\tau \in \cC_{t-1}} \Big[\mu\big((\bphi_\tau^1-\bphi_\tau^2)^\top \btheta\big) - \mu\big((\bphi_\tau^1-\bphi_\tau^2)^\top \btheta^*\big)\Big](\bphi_\tau^1-\bphi_\tau^2) \\
    & = \Big[\lambda \Ib + \sum_{\tau \in \cC_{t-1}}  \mu'\big((\bphi_\tau^1-\bphi_\tau^2)^\top \tilde{\btheta}_t \big)(\bphi_\tau^1-\bphi_\tau^2)(\bphi_\tau^1-\bphi_\tau^2)^\top \Big] (\hat{\btheta}_t - \btheta^*) \\
    & = F(\tilde{\btheta}_t) (\hat{\btheta}_t - \btheta^*),
\end{align*}
where we define $
    F(\tilde{\btheta}_t)=\lambda \Ib + \sum_{\tau \in \cC_{t-1}}  \mu'\big((\bphi_\tau^1-\bphi_\tau^2)^\top \tilde{\btheta}_t \big)(\bphi_\tau^1-\bphi_\tau^2)(\bphi_\tau^1-\bphi_\tau^2)^\top$.
Thus, we have:
\begin{align*}
    \|\hat \btheta_{t} - \btheta^*\|^2_{\bSigma_{t-1}} &= (Z_t - \lambda \btheta^*)^\top F(\tilde{\btheta}_t)^{-1} \bSigma_{t-1} F(\tilde{\btheta}_t)^{-1}(Z_t - \lambda \btheta^*) \\
    & \leq \frac{1}{\kappa_{\mu}^2} (Z_t - \lambda \btheta^*)^\top \bSigma_{t-1}^{-1} (Z_t - \lambda \btheta^*) \\
    & = \frac{1}{\kappa_{\mu}^2} \| Z_t - \lambda \btheta^* \|_{\bSigma_{t-1}^{-1}}^2
    % & \leq \frac{1}{\kappa_{\mu}} \big( \|Z_t\|_{\bSigma_{t-1}^{-1}}^2 + \lambda^2 \| \btheta^* \|_{\bSigma_{t-1}^{-1}}^2 \big) \\
    % & \leq \frac{1}{\kappa_{\mu}} \big( \|Z_t\|_{\bSigma_{t-1}^{-1}}^2 + \lambda B^2 \big)
\end{align*}
where the inequality holds due to $F(\tilde \btheta_t) \succeq \kappa_\mu\hat \bSigma_{t-1}$. Now we have:
\begin{align*}
    \|\hat \btheta_{t} - \btheta^*\|_{\bSigma_{t-1}} & \leq \frac{1}{\kappa_{\mu}} \| Z_t - \lambda \btheta^* \|_{\bSigma_{t-1}^{-1}} \\
    & \leq \frac{1}{\kappa_{\mu}} \big( \|Z_t\|_{\bSigma_{t-1}^{-1}} + \| \lambda \btheta^* \|_{\bSigma_{t-1}^{-1}} \big) \\
    & \leq \frac{1}{\kappa_{\mu}} \big( \|Z_t\|_{\bSigma_{t-1}^{-1}} + \sqrt{\lambda} B \big),
\end{align*}
where the second inequality holds due to triangle inequality and last inequality holds due to $\bSigma_{t-1} \succeq \lambda \Ib$. Now we only need to bound $\|Z_t\|_{\bSigma_{t-1}^{-1}}$. 

According to Lemma~\ref{lem:concentrate}, with probability at least $1-\delta$, we have 
\begin{align*}
    \|Z_t\|_{\bSigma_{t-1}^{-1}} \leq \sqrt{2 \log \bigg(\frac{\sqrt{\det(\bSigma_{t-1})}}{\sqrt{\det(\bSigma_0)}\delta}\bigg)}  \leq \sqrt{2 \log \bigg(\frac{\det(\bSigma_{t-1})}{\lambda^d \delta}\bigg)} \leq \sqrt{2 d \log \bigg(\frac{\lambda + |\cC_t|L^2/d}{\lambda^d \delta}\bigg)}
\end{align*}
where the first inequality holds due to Lemma~\ref{lem:concentrate} and the last inequality holds due to Lemma~\ref{lem:determine}. Now we combine the two term and have
\begin{equation*}
     \|\hat \btheta_{t} - \btheta^*\|_{\bSigma_{t-1}} \leq \frac{1}{\kappa_\mu} \Bigg(\sqrt{\lambda}B + \sqrt{2 d \log \bigg(\frac{\lambda + |\cC_t|L^2/d}{\lambda^d \delta}\bigg)}\Bigg),
\end{equation*}
which concludes our statement.
\end{proof}

\subsection{Proof of Lemma \ref{lem:beta-gamma}}
\begin{proof}[Proof of Lemma \ref{lem:beta-gamma}]
This proof follows the proof in \citet{zhang2023interplay}. First, we recall that $\Gamma =\Delta \kappa_{\mu}/2d\iota_1$ and $\beta = \kappa_{\mu}^{-1} (1 + 4\sqrt{d\iota_2} + \sqrt{2d\iota_3})$. We will first demonstrate that the selection of $\beta$ satisfy the requirement in Lemma \ref{lem:beta-gamma}. Recalling that $\lambda = B^{-2}$, through basic calculation, we have
\begin{align*}
\kappa_{\mu}\beta & \geq  1 + \sqrt{2d\log\big((1 + L^2B^2 16 d\Gamma^{-2}\iota_2)/ d\delta\big)} \\
& \geq 1 + \sqrt{2d\log\big((1 + L^2B^2 |\cC_T|)/ d\delta\big)} \\
& = \sqrt{\lambda}B + \sqrt{2d\log (\lambda + |\cC_{T}|L^2/d\lambda \delta)},
\end{align*}
where the first inequality holds by neglecting the positive term $4\sqrt{d\iota_2}$ and $d \geq 1$, the second inequality holds due to Lemma~\ref{lem:label-cplx} and the last equation holds by plugging in $\lambda=B^{-2}$. Now we come to the second statement. First, by basic computation, we have
\begin{equation*}
    \sqrt{2\iota_3} \leq \sqrt{2\log((1 + 16L^2B^2\Gamma^{-2}\iota_2)} + \sqrt{2\log (1/\delta))} .
\end{equation*}
Notice that we have $L \geq 1$, $B \geq 1$, and $\Gamma \leq 1$, which further implies that $LB\Gamma^{-1} \geq 1$, leading to
\begin{equation*}
    2 + 4\sqrt{\iota_2} \leq 6 \iota_2, \quad \sqrt{2\log((1 + 16L^2B^2\Gamma^{-2}\iota_2)} \leq 3 \iota_2.
\end{equation*}
Therefore, we have:
\begin{align*}
    2 + 4\sqrt{\iota_2} + \sqrt{2 \iota_3} & \leq 9 \iota_2 + 2\sqrt{\log (1/\delta)} \\
    & \leq 9 \log (6LB\sqrt{d}\Delta^{-1} \kappa_{\mu}^{-1} \iota_1) +  2\sqrt{\log (1/\delta)} .
\end{align*}
By Lemma \ref{lem:log2lin}, we can identify the sufficient condition for the following inequality
\begin{equation}\label{eq:log2lin}
    (6LB\sqrt{d}\Delta^{-1} \kappa_{\mu}^{-1})\iota_1 \geq 9 (6LB\sqrt{d}\Delta^{-1} \kappa_{\mu}^{-1}) \log (6LB\sqrt{d}\Delta^{-1} \kappa_{\mu}^{-1} \iota_1) +  2(6LB\sqrt{d}\Delta^{-1} \kappa_{\mu}^{-1})\sqrt{\log (1/\delta)} 
\end{equation}
is that
\begin{equation*}
    \iota_1 \geq 36 \log (108 LB\sqrt{d} \Delta^{-1} \kappa_{mu}^{-1} ) + \sqrt{8\log (1/\delta)},
\end{equation*}
which naturally holds due to our definition of $\iota_1$. Eliminating the $6LB\sqrt{d}\Delta^{-1} \kappa_{\mu}^{-1}$ term in~\eqref{eq:log2lin} yields that
\begin{equation*}
    \iota_1 \geq 2 + 4\sqrt{\iota_2} + \sqrt{2\iota_3},
\end{equation*}
which implies that 
\begin{equation*}
    2\beta\Gamma = \frac{\Delta \kappa_{\mu}}{2\sqrt{d}\iota_1} \frac{1}{\kappa_{\mu}} (1 + 2\sqrt{d\iota_2} + \sqrt{2\iota_3} ) < \Delta.
\end{equation*}
Thus, we complete the proof of Lemma \ref{lem:beta-gamma}.
% where $\iota_1 = 42\log(126LB\sqrt{d}\Delta^{-1}\kappa_{\mu}^{-1}) + \sqrt{8\log(1/\delta)}$, $\iota_2 = \log(3LB\Gamma^{-1})$ and $\iota_3 = \log((1 + 16L^2B^2\Gamma^{-2}\iota_2)/\delta)$, Then we have $\beta$ satisfy equation~\eqref{eq:beta-ineq} and $2\beta \Gamma \leq \Delta$.
\end{proof}
\subsection{Proof of Lemma \ref{optimistic}}
\begin{proof}[Proof of Lemma \ref{optimistic}]
    For each context $x\in \cX$ and action $y\in \cA$, we have
    \begin{align}
        |D_t(x,y)-\langle \hat{\btheta}_t, \bphi(x,y) - \bphi_t^2 \rangle|&=|\langle \hat{\btheta}_t- \btheta^*, \bphi(x,y) - \bphi_t^2 \rangle|\notag\\
        &\leq \|\hat{\btheta}_t - \btheta^* \|_{\bSigma_{t-1}^{-1}} \cdot \|\bphi(x,y) - \bphi_t^2\|_{\bSigma_{t-1}}\notag\\
        &\leq \beta  \|\bphi(x,y) - \bphi_t^2\|_{\bSigma_{t-1}},\label{eq:001}
    \end{align}
    where the first inequality holds due to Cauchy–Schwarz inequality and the second inequality holds due to event $\cE_1$. Therefore, we have
        \begin{align*}
        \langle \hat{\btheta}_t, \bphi(x,y) - \bphi_t^2 \rangle + \beta  \|\bphi(x,y) - \bphi_t^2\|_{\bSigma_{t-1}} \ge   D_t(x,y),
    \end{align*}
    where the first inequality holds due to \eqref{eq:001}. In addition, we have $D_t(x,y)=\langle\btheta^*, \bphi(x,y) - \bphi_t^2 \rangle\leq 2 $. Combing these two results, we have
    \begin{align*}
        \hat{D}_t(x,y)= \min\{\langle \hat{\btheta}_t, \bphi(x,y) - \bphi_t^2 \rangle + \beta  \|\bphi(x,y) - \bphi_t^2\|_{\bSigma_{t-1}},2\}\ge D_t(x,y).
    \end{align*}
    On the other hand, we have
    \begin{align*}
        \hat{D}_t(x,y)\leq \langle \hat{\btheta}_t, \bphi(x,y) - \bphi_t^2 \rangle + \beta  \|\bphi(x,y) - \bphi_t^2\|_{\bSigma_{t-1}} \leq  D_t(x,y) + 2\beta  \|\bphi(x,y) - \bphi_t^2\|_{\bSigma_{t-1}},
    \end{align*}
    where the first inequality holds due to the definition of $\hat{D}_t(x,y)$ and the second inequality holds due to \eqref{eq:001}. Thus, we complete the proof of Lemma \ref{optimistic}.
\end{proof}
\subsection{Proof of Lemma \ref{lem:omd}}
\begin{proof}[Proof of Lemma \ref{lem:omd}] The proof follows the approach in \citet{he2022near}. Recall that we assume the policy is updated in round $t$ according to the update rule~\eqref{eq:one-step-descent}, for all contexts $x \in \cX$. Thus, we have:
\begin{align}
    \exp\big\{ \eta \hat{D}_t(x, y)\big\}&=\frac{\pi_t(y|x) \exp\big\{\eta \hat{D}_t(x, y)\big\}}{\pi_t(y|x)}=\frac{\rho \pi_{t+1}(y|x)}{\pi_t(y|x)},\label{eq:31}
\end{align}
where $\rho = \sum_{y \in \cA} \pi_t(y|x) \exp\big\{ \eta \hat{D}_t(x, y) \big\}$ is the regularization term that is the same for all actions $y \in \cA$. Therefore, we have
\begin{align}
    &\sum_{y\in \cA} \eta \hat{D}_t(x, y)\big(\pi^*(y|x)-\pi_{t+1}(y|x)\big)\notag\\
    &=\sum_{y\in \cA} \big(\log \rho+\log \pi_{t+1}(y|x) -\log \pi_t(y|x) \big)\big(\pi^*(y|x)-\pi_{t+1}(y|x)\big)\notag\\
    &=\sum_{y\in \cA} \pi^*(y|x)\big(\log \pi_{t+1}(y|x) -\log \pi_t(y|x) \big) - \pi_{t+1}(y|x)\big(\log \pi_{t+1}(y|x) -\log \pi_t(y|x) \big)\notag\\
    &= \sum_{y\in \cA} \pi^*(y|x)\big(\log \pi^{*}(y|x) -\log \pi_{t}(y|x) \big) +\sum_{y\in \cA}\pi^*(y|x) \big(\log \pi_{t+1}(y|x) -\log \pi^*(y|x) \big)\notag\\
    &\qquad -\sum_{y\in \cA} \pi_{t+1}(y|x)\big(\log \pi_{t+1}(y|x) -\log \pi_t(y|x) \big)\notag\\
    &=\text{KL}\big(\pi^*(\cdot|x)\|\pi_{t}(\cdot|x)\big)-\text{KL}\big(\pi^*(\cdot|x)\|\pi_{t+1}(\cdot|x)\big)-\text{KL}\big(\pi_{t+1}(\cdot|x)\|\pi_t(\cdot|x)\big),\label{eq:32}
\end{align}
where the first equation holds due to \eqref{eq:31} and the second equation holds due to $\sum_{y\in \cA} \big(\pi^*(y|x)-\pi_{t+1}(y|x)\big) =0$.
Consequently, we have
\begin{align}
    &\EE_{y\sim \pi^*(\cdot|x)}\big[\hat{D}_t(x, y)\big]-\EE_{y\sim \pi_t(\cdot|x)}\big[\hat{D}_t(x, y)\big]\notag\\
    &=\sum_{y\in \cA} \hat{D}_t(x, y)\big(\pi^*(y|x)-\pi_t(y|x)\big)\notag\\
    &=\sum_{y\in \cA} \hat{D}_t(x, y)\big(\pi^*(y|x)-\pi_{t+1}(y|x)\big)+\sum_{y\in \cA} \hat{D}_t(x, y)\big(\pi_{t+1}(y|x)-\pi_t(y|x)\big)\notag\\
    &\leq \sum_{y\in \cA} \hat{D}_t(x, y)\big(\pi^*(y|x)-\pi_{t+1}(y|x)\big)+ 2\big\|\pi_{t+1}(\cdot|x)-\pi_t(\cdot|x)\big\|_1\notag\\
    &=\eta^{-1}\Big(\text{KL}\big(\pi^*(\cdot|x)\|\pi_{t}(\cdot|x)\big)-\text{KL}\big(\pi^*(\cdot|x)\|\pi_{t+1}(\cdot|x)\big)-\text{KL}\big(\pi_{t+1}(\cdot|x)\|\pi_t(\cdot|x)\big)\Big)\notag\\
    &\qquad + 2\big\|\pi_{t+1}(\cdot|x)-\pi_t(\cdot|x)\big\|_1\notag\\
    &\leq \eta^{-1}\Big(\text{KL}\big(\pi^*(\cdot|x)\|\pi_t(\cdot|x)\big)-\text{KL}\big(\pi^*(\cdot|x)\|\pi_{t+1}(\cdot|x)\big)\Big)\notag\\
    &\qquad + 2\big\|\pi_{t+1}(\cdot|x)-\pi_t(\cdot|x)\big\|_1-\frac{\big\|\pi_{t+1}(\cdot|x)-\pi_t(\cdot|x)\big\|_1^2}{2\eta}\notag\\
    &\leq 2 \eta +\eta^{-1}\Big(\text{KL}\big(\pi^*(\cdot|x)\|\pi_t(\cdot|x)\big)-\text{KL}\big(\pi^*(\cdot|x)\|\pi_{t+1}(\cdot|x)\big)\Big),
\end{align}
where the first inequality holds due to the fact that $0 \leq \hat{D}_t(x, y)\leq 2$, the second inequality holds due to Pinsker’s inequality and the last inequality holds due to the fact that $ax-bx^2\leq a^2/4b$. Finally, taking expectation over $x \sim \cD$ finishes the proof.
\end{proof}

\section{Auxiliary Lemmas}

\begin{lemma}[Lemma A.2, \citealp{shalev2014understanding}]\label{lem:log2lin}
Let $a \geq 1$ and $b \geq 0$, then $x \geq 4a \log(2a) + 2b$ results in $x \geq a \log x + b$.
\end{lemma}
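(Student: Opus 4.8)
The plan is to convert the implicit inequality $x \ge a\log x + b$ into the explicit hypothesis on $x$ by means of the elementary tangent-line bound for the logarithm. First I would record that, because $\log$ is concave, for every $\alpha>0$ its graph lies below its tangent line at the point $x=1/\alpha$, which gives
\begin{align*}
    \log x \le \alpha x - \log\alpha - 1 \qquad \text{for all } x,\alpha>0.
\end{align*}
This is a one-line calculus fact: the map $g(x)=\log x - \alpha x$ attains its maximum at $x=1/\alpha$ with value $-\log\alpha-1$, so $g(x)\le -\log\alpha-1$ everywhere.

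Next I would multiply by $a$ and make the balancing choice $\alpha = 1/(2a)$, which turns the linear term into exactly $x/2$. Since $-a\log\alpha = a\log(2a)$ and $-a\le 0$, this yields
\begin{align*}
    a\log x \le \frac{1}{2}x + a\log(2a) - a \le \frac{1}{2}x + a\log(2a),
\end{align*}
and hence, after adding $b$,
\begin{align*}
    a\log x + b \le \frac{1}{2}x + a\log(2a) + b.
\end{align*}

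Finally I would feed in the hypothesis $x \ge 4a\log(2a)+2b$. Because $a\ge 1$ forces $\log(2a)>0$, this is stronger than the bound $x\ge 2a\log(2a)+2b$, which rearranges to $a\log(2a)+b \le x/2$; substituting this into the last display gives $a\log x + b \le \frac{1}{2}x + \frac{1}{2}x = x$, as required. I do not expect a genuine obstacle in this argument. The only points deserving care are fixing the correct direction and the optimal $\alpha$ in the tangent-line step, and noting the harmless factor-of-two slack between the clean threshold $2a\log(2a)+2b$ that the proof actually produces and the looser threshold $4a\log(2a)+2b$ stated in the lemma.
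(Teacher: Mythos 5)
Your proof is correct. Be aware, though, that the paper itself contains no proof of this statement: it is imported verbatim as Lemma A.2 of \citet{shalev2014understanding} and used as a black box, so there is no internal argument to compare against; your derivation therefore has to stand (and does stand) on its own. The concavity bound $\log x \le \alpha x - \log\alpha - 1$ holds for all $x,\alpha>0$; the choice $\alpha = 1/(2a)$ gives $a\log x \le x/2 + a\log(2a) - a \le x/2 + a\log(2a)$; and since $a\ge 1$ forces $\log(2a)\ge \log 2>0$, the hypothesis $x \ge 4a\log(2a)+2b$ implies $x/2 \ge a\log(2a)+b$, whence $a\log x + b \le x/2 + x/2 = x$. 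Two minor points worth making explicit: (i) the hypothesis also guarantees $x \ge 4\log 2 > 0$, so $\log x$ is defined and the tangent-line bound is applicable; (ii) as you observe, your argument actually establishes the sharper threshold $2a\log(2a)+2b$ (indeed, under that weaker hypothesis it yields $x \ge a\log x + b + a$), so the constant $4$ in the cited lemma is slack that your proof never uses. This slack is harmless for the paper, since the lemma is only invoked qualitatively (in the proofs of Lemmas \ref{lem:label-cplx} and \ref{lem:beta-gamma}) to turn an implicit inequality of the form $x \ge a\log x + b$ into an explicit bound on $x$, and any constant of this order suffices there.
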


\begin{lemma}[Theorem 1, \citealp{abbasi2011improved}]\label{lem:concentrate}
Let $\{\cF_t\}_{t=0}^\infty$ be a filtration. Let $\{\epsilon_t\}_{t=1}^\infty$ be a real-valued stochastic process such that $\epsilon_t$ is $\cF_t$-measurable and $\epsilon_t$ is conditionally $R$-sub-Gaussian for some $R \ge 0$. Let $\{\bphi_t\}_{t=1}^\infty$ be an $\RR^d$-valued stochastic process such that $\bphi_t$ is $\cF_{t-1}$ measurable and $\|\bphi_t\|_2 \le L$ for all $t$. For any $t \ge 0$, define $\Ub_t = \lambda \Ib + \sum_{i=1}^t\bphi_i\bphi_i^{\top}$. Then for any $\delta > 0$, with probability at least $1 - \delta$, for all $t \ge 0$, we have
\begin{align*}
    \left\|\sum_{i=1}^t\bphi_i\epsilon_i\right\|_{\Ub_t^{-1}}^2 \le 2R^2\log\left(\frac{\sqrt{\det(\Ub_t)}}{\sqrt{\det(\Ub_0)}\delta}\right).
\end{align*}
\end{lemma}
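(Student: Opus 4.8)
The plan is to prove this self-normalized tail bound via the \emph{method of mixtures}, following the pseudo-maximization argument of \citet{abbasi2011improved}. Write $S_t = \sum_{i=1}^t \bphi_i \epsilon_i$ and $V_t = \sum_{i=1}^t \bphi_i \bphi_i^\top$, so that $\Ub_t = \Ub_0 + V_t$ with $\Ub_0 = \lambda \Ib$. For a fixed direction $\mathbf{x} \in \RR^d$, I would first introduce the scalar process
\begin{align*}
    M_t^{\mathbf{x}} = \exp\left(\langle \mathbf{x}, S_t\rangle - \frac{R^2}{2}\|\mathbf{x}\|_{V_t}^2\right),
\end{align*}
and verify that $(M_t^{\mathbf{x}})_{t\ge0}$ is a nonnegative supermartingale with $M_0^{\mathbf{x}} = 1$. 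The key input here is conditional $R$-sub-Gaussianity: since $\bphi_t$ is $\cF_{t-1}$-measurable, the one-step ratio equals $\exp(\langle \mathbf{x},\bphi_t\rangle\epsilon_t - \tfrac{R^2}{2}\langle \mathbf{x},\bphi_t\rangle^2)$, and conditioning on $\cF_{t-1}$ gives $\EE[\exp(\langle \mathbf{x}, \bphi_t\rangle \epsilon_t)\mid \cF_{t-1}] \le \exp(R^2 \langle \mathbf{x}, \bphi_t\rangle^2/2)$, which is exactly what is needed so that $\EE[M_t^{\mathbf{x}} \mid \cF_{t-1}] \le M_{t-1}^{\mathbf{x}}$.

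The second and central step is to remove the dependence on the fixed direction $\mathbf{x}$ by averaging. I would place a Gaussian prior $h = \mathcal{N}(\mathbf{0}, (R^2)^{-1}\Ub_0^{-1})$ on the direction and define the mixed process $M_t = \int_{\RR^d} M_t^{\mathbf{x}}\, dh(\mathbf{x})$. By Tonelli's theorem this interchange preserves the supermartingale structure, so $(M_t)$ is again a nonnegative supermartingale with $M_0 = 1$. The payoff is that the integrand is a Gaussian in $\mathbf{x}$ with precision $R^2\Ub_t$, so completing the square evaluates the integral in closed form:
\begin{align*}
    M_t = \left(\frac{\det(\Ub_0)}{\det(\Ub_t)}\right)^{1/2}\exp\left(\frac{1}{2R^2}\|S_t\|_{\Ub_t^{-1}}^2\right).
\end{align*}
This is the step where the self-normalizing matrix $\Ub_t^{-1}$ and the determinant ratio simultaneously emerge, the $R^{2d}$ factors cancelling between the two determinants.

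The final step is to convert the supermartingale bound into the uniform-in-$t$ tail bound. Applying Ville's maximal inequality for nonnegative supermartingales, $\PP(\sup_{t\ge0} M_t \ge 1/\delta) \le \delta\, \EE[M_0] = \delta$. On the complementary event, $M_t < 1/\delta$ for all $t$; substituting the closed form and taking logarithms yields
\begin{align*}
    \frac{1}{2R^2}\|S_t\|_{\Ub_t^{-1}}^2 \le \log\frac{1}{\delta} + \frac12\log\frac{\det(\Ub_t)}{\det(\Ub_0)},
\end{align*}
and rearranging produces exactly the claimed bound. I expect the main obstacle to be the mixing step: one must justify the Tonelli interchange (nonnegativity makes this clean) and, more delicately, secure \emph{uniform} control over all $t$ rather than a fixed-$t$ statement. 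This uniformity is precisely why the supermartingale/Ville route is taken instead of a naive union bound over $t$, and why the pseudo-maximization (integrating out $\mathbf{x}$ before, rather than optimizing after) is essential to close the gap.
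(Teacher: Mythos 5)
Your proposal is correct, and it is precisely the canonical pseudo-maximization (method of mixtures) argument of \citet{abbasi2011improved}: the fixed-direction exponential supermartingale, Gaussian mixing with prior $\mathcal{N}(\mathbf{0}, R^{-2}\Ub_0^{-1})$ evaluated by completing the square, and Ville's maximal inequality (equivalently, the stopping-time argument in the original) to obtain uniformity in $t$. The paper itself offers no proof of this lemma---it imports it verbatim as Theorem 1 of \citet{abbasi2011improved}---so your reconstruction matches the source proof step for step.
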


\begin{lemma}[Lemma 11,~\citealt{abbasi2011improved}]\label{lm:elliptical}
    Let $\{\bphi_i\}_{i=1}^t$ be a sequence in $\RR^d$, define $\Ub_i = \lambda \Ib + \sum_{i=1}^t\bphi_i\bphi_i^\top$, then 
    \begin{align*}
        \sum_{i=1}^t \min\left\{1, \|\bphi_i\|_{\Ub_{i - 1}^{-1}}^2\right\} \le 2d \log\left(\frac{\lambda d + tL^2}{\lambda d}\right).
    \end{align*}
\end{lemma}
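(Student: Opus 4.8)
The plan is to follow the classical potential-function argument built on the determinant of the Gram matrix, using the natural indexing $\Ub_i = \lambda\Ib + \sum_{j\le i}\bphi_j\bphi_j^\top$. First I would record the one-step determinant recursion: since $\Ub_i = \Ub_{i-1} + \bphi_i\bphi_i^\top$ is a rank-one update, the matrix determinant lemma gives
\begin{align*}
    \det(\Ub_i) = \det(\Ub_{i-1})\bigl(1 + \|\bphi_i\|_{\Ub_{i-1}^{-1}}^2\bigr).
\end{align*}
Taking logarithms and summing over $i$ telescopes to $\sum_{i=1}^t \log\bigl(1 + \|\bphi_i\|_{\Ub_{i-1}^{-1}}^2\bigr) = \log\bigl(\det(\Ub_t)/\det(\Ub_0)\bigr)$, which converts the running sum of quadratic forms into a single ratio of determinants.

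The second ingredient is the elementary inequality $\min\{1, x\} \le 2\log(1+x)$, valid for all $x \ge 0$. I would verify this by examining the gap $g(x) = 2\log(1+x) - \min\{1,x\}$: it satisfies $g(0)=0$ and $g'(x) = (1-x)/(1+x) \ge 0$ on $[0,1]$, while for $x>1$ the left side equals $1 < 2\log 2$. Applying this with $x = \|\bphi_i\|_{\Ub_{i-1}^{-1}}^2$ and chaining with the telescoped identity yields
\begin{align*}
    \sum_{i=1}^t \min\bigl\{1, \|\bphi_i\|_{\Ub_{i-1}^{-1}}^2\bigr\} \le 2\log\frac{\det(\Ub_t)}{\det(\Ub_0)}.
\end{align*}

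It then remains to control the two determinants. Since $\Ub_0 = \lambda\Ib$, we have $\det(\Ub_0) = \lambda^d$. For the numerator I would apply the AM--GM inequality to the (positive) eigenvalues of $\Ub_t$, giving $\det(\Ub_t) \le \bigl(\mathrm{tr}(\Ub_t)/d\bigr)^d$, together with the trace bound $\mathrm{tr}(\Ub_t) = \lambda d + \sum_{i=1}^t\|\bphi_i\|_2^2 \le \lambda d + tL^2$ coming from $\|\bphi_i\|_2 \le L$. Substituting both determinant estimates produces exactly the claimed bound $2d\log\bigl((\lambda d + tL^2)/(\lambda d)\bigr)$.

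There is no deep obstacle here, as this is a standard result; the step requiring the most care is pinning down the truncation-to-logarithm comparison $\min\{1,x\}\le 2\log(1+x)$ with precisely the constant $2$, since it is this constant that delivers the factor $2d$ rather than an inflated one. Matching this constant is what ensures the bound is in the exact form consumed downstream by Lemma~\ref{lem:label-cplx} and Lemma~\ref{lem:ucb}.
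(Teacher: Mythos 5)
Your proof is correct and follows exactly the standard determinant--trace potential argument from \citet{abbasi2011improved}, which is what the paper relies on by citing this lemma without reproving it: the rank-one determinant recursion, the comparison $\min\{1,x\}\le 2\log(1+x)$ for $x\ge 0$, and the AM--GM/trace bound $\det(\Ub_t)\le\bigl((\lambda d+tL^2)/d\bigr)^d$ together with $\det(\Ub_0)=\lambda^d$. You also correctly supply the implicit hypothesis $\|\bphi_i\|_2\le L$ and repair the indexing typo in the statement (the Gram matrix should be $\Ub_i=\lambda\Ib+\sum_{j\le i}\bphi_j\bphi_j^\top$), so nothing is missing.
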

The following auxiliary lemma and its corollary are useful
\begin{lemma}[Lemma A.2,~\citealt{shalev2014understanding}]\label{lm:xlogx}
Let $a \ge 1$ and $b > 0$. Then $x \ge 4a\log(2a) + 2b$ yields $x \ge a\log(x) + b$.
\end{lemma}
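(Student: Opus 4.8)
The plan is to derive the implication from a single tangent-line bound for the concave logarithm, which sidesteps any case analysis or differentiation. Since $x \ge 4a\log(2a) + 2b > 0$, the quantity $\log x$ is well defined, and concavity of $\log$ gives, for every $c>0$, the tangent inequality $\log x \le \log c + x/c - 1$ valid for all $x>0$. I would instantiate this at the point $c = 2a$ (legitimate because $a \ge 1 > 0$), obtaining $\log x \le \log(2a) + x/(2a) - 1$.

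Multiplying this by $a>0$ and adding $b$ then yields
\begin{align*}
a\log x + b \le a\log(2a) + \frac{x}{2} - a + b.
\end{align*}
The remaining task is purely arithmetic: it suffices to check that the right-hand side is at most $x$, i.e.\ that $a\log(2a) - a + b \le x/2$, equivalently $x \ge 2a\log(2a) - 2a + 2b$.

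To close the argument I would invoke the hypothesis. Since $a \ge 1$ forces $2a \ge 2$, hence $\log(2a) \ge \log 2 > 0$ and $2a\log(2a) \ge 0$, while $-2a \le 0$, we obtain the chain $2a\log(2a) - 2a + 2b \le 2a\log(2a) + 2b \le 4a\log(2a) + 2b \le x$, so the required bound holds with room to spare. Combining this with the displayed inequality gives $a\log x + b \le x$, which is exactly the claim.

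The proof is elementary and I do not anticipate a genuine obstacle; the only point requiring care is the choice of the tangent point $c = 2a$, which is engineered precisely so that the linear term $a\cdot x/(2a) = x/2$ absorbs exactly half of $x$, leaving the logarithmic factor $a\log(2a)$ to match the $4a\log(2a)$ budget supplied by the hypothesis. An alternative would be the monotonicity route — setting $f(x) = x - a\log x - b$, noting that $f'(x)=1-a/x>0$ makes $f$ increasing on $[a,\infty)$, checking that the threshold $4a\log(2a)+2b$ exceeds $a$, and then verifying $f\ge 0$ at that threshold — but this forces one to separately bound $\log\bigl(4a\log(2a)+2b\bigr)$ and is messier, so I would prefer the tangent-line version.
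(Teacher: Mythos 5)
Your proof is correct and every step checks out: the tangent-line bound $\log x \le \log c + x/c - 1$ holds for all $x,c>0$ by concavity of $\log$; instantiating at $c=2a$ and multiplying by $a>0$ gives $a\log x + b \le a\log(2a) + x/2 - a + b$; and the hypothesis does imply the sufficient condition $x \ge 2a\log(2a) - 2a + 2b$, since $a\ge 1$ forces $\log(2a)\ge\log 2>0$ and hence $4a\log(2a)+2b \ge 2a\log(2a)-2a+2b$ (you also correctly verify $x>0$ so that $\log x$ is defined). There is, however, nothing in the paper to compare against: this lemma is imported verbatim as Lemma A.2 of \citet{shalev2014understanding} and stated without proof — indeed the paper states it twice, as Lemma \ref{lem:log2lin} and Lemma \ref{lm:xlogx}. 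The textbook's own argument takes a genuinely different route: it splits $x = x/2 + x/2$, gets $x/2 \ge b$ directly from the hypothesis, and obtains $x/2 \ge a\log x$ by invoking a companion lemma (that $x \ge 2a'\log a'$ implies $x \ge a'\log x$, applied with $a'=2a$), which in turn needs its own separate justification. Your single tangent inequality collapses both halves into one line and is fully self-contained, which is the cleaner option here. As a bonus, your argument actually establishes the conclusion under the weaker threshold $x \ge 2a\log(2a) - 2a + 2b$, quantifying the slack in the stated constant $4a\log(2a)+2b$ — consistent with your observation that the bound holds with room to spare.
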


\begin{lemma}[Lemma C.7, \citealp{zhang2023interplay}]\label{lem:determine}
Suppose sequence $\{\xb_i\}_{i=1}^t \subset \RR^d$ and for any $i\leq t$, $\|\xb_i\|_2 \le L$. For any index subset $\cC \subseteq [t]$, define $\Ub = \lambda \Ib + \sum_{i \in \cC} \xb_i\xb_i^\top$ for some $\lambda > 0$, then $\det(\Ub) \le (\lambda + |\cC|L^2/d)^d$.
\end{lemma}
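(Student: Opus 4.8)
The plan is to use the standard chain of eigenvalue, AM--GM, and trace arguments. Since $\lambda > 0$ and each $\xb_i \xb_i^\top$ is positive semidefinite, the matrix $\Ub = \lambda \Ib + \sum_{i \in \cC} \xb_i \xb_i^\top$ is symmetric positive definite, so it admits $d$ real positive eigenvalues $\mu_1, \dots, \mu_d$. First I would write the determinant as the product of these eigenvalues, $\det(\Ub) = \prod_{j=1}^d \mu_j$.

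Next I would apply the AM--GM inequality to the (nonnegative) eigenvalues, using that the trace equals their sum:
\begin{align*}
\det(\Ub) = \prod_{j=1}^d \mu_j \le \Big( \frac{1}{d} \sum_{j=1}^d \mu_j \Big)^d = \Big( \frac{\mathrm{tr}(\Ub)}{d} \Big)^d.
\end{align*}
Then I would bound the trace directly. By linearity of the trace together with the rank-one identity $\mathrm{tr}(\xb_i \xb_i^\top) = \|\xb_i\|_2^2$ and the norm bound $\|\xb_i\|_2 \le L$, I obtain
\begin{align*}
\mathrm{tr}(\Ub) = \lambda\, \mathrm{tr}(\Ib) + \sum_{i \in \cC} \mathrm{tr}(\xb_i \xb_i^\top) = \lambda d + \sum_{i \in \cC} \|\xb_i\|_2^2 \le \lambda d + |\cC| L^2.
\end{align*}
Substituting this into the AM--GM bound yields
\begin{align*}
\det(\Ub) \le \Big( \frac{\lambda d + |\cC| L^2}{d} \Big)^d = \Big( \lambda + \frac{|\cC| L^2}{d} \Big)^d,
\end{align*}
which is exactly the claimed inequality.

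There is no genuine obstacle here; the argument is entirely routine once phrased in terms of eigenvalues. The only points that warrant a sentence of justification are the two classical facts being invoked—that for a symmetric positive definite matrix the determinant and trace equal the product and sum of its eigenvalues, respectively, and that AM--GM is applicable because all eigenvalues are nonnegative—both of which are standard and require no further development.
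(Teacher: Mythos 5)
Your proof is correct: the eigenvalue--AM--GM--trace chain is exactly the standard argument for this determinant--trace inequality, and the paper itself states the lemma without proof, citing it from \citet{zhang2023interplay}, where the same routine argument applies. Nothing is missing; your two flagged justifications (determinant and trace as product and sum of eigenvalues, applicability of AM--GM to nonnegative eigenvalues) are indeed the only points worth a word.
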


\begin{lemma}[Azuma–Hoeffding inequality, \citealt{cesa2006prediction}]\label{lemma:azuma}
Let $\{x_i\}_{i=1}^n$ be a martingale difference sequence with respect to a filtration $\{\cG_{i}\}$ satisfying $|x_i| \leq M$ for some constant $M$, $x_i$ is $\cG_{i+1}$-measurable, $\EE[x_i|\cG_i] = 0$. Then for any $0<\delta<1$, with probability at least $1-\delta$, we have 
\begin{align}
    \sum_{i=1}^n x_i\leq M\sqrt{2n \log (1/\delta)}.\notag
\end{align} 
\end{lemma}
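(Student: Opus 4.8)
The plan is to prove this one-sided tail bound via the standard exponential-moment (Chernoff) method, combined with a conditional version of Hoeffding's lemma applied along the filtration. First I would fix any $\lambda>0$ and apply Markov's inequality to the exponential: for any threshold $u>0$,
\[
\Pr\Big(\sum_{i=1}^n x_i \ge u\Big) \le e^{-\lambda u}\,\EE\Big[\exp\Big(\lambda \sum_{i=1}^n x_i\Big)\Big].
\]
The crux is then to control the moment generating function $\EE[\exp(\lambda \sum_{i=1}^n x_i)]$ and to convert the resulting sub-Gaussian tail into the stated high-probability statement.

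The key step is to peel off one term at a time using the tower property. Writing $S_k = \sum_{i=1}^k x_i$, I condition on $\cG_n$ and use that $x_n$ is $\cG_{n+1}$-measurable with $\EE[x_n\mid \cG_n]=0$ and $|x_n|\le M$, while $S_{n-1}$ is $\cG_n$-measurable and hence factors out:
\[
\EE[\exp(\lambda S_n)] = \EE\big[\exp(\lambda S_{n-1})\,\EE[\exp(\lambda x_n)\mid \cG_n]\big].
\]
Here I invoke Hoeffding's lemma in its conditional form: for a mean-zero random variable supported in an interval of length $2M$, the (conditional) MGF is at most $\exp(\lambda^2 (2M)^2/8)=\exp(\lambda^2 M^2/2)$. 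This factor is deterministic, so it pulls out of the expectation, giving $\EE[\exp(\lambda S_n)] \le \exp(\lambda^2 M^2/2)\,\EE[\exp(\lambda S_{n-1})]$. Iterating $n$ times (equivalently, by induction on $n$, with base case $\EE[\exp(\lambda x_1)]\le\exp(\lambda^2 M^2/2)$ from conditioning on $\cG_1$) yields $\EE[\exp(\lambda S_n)] \le \exp(n\lambda^2 M^2/2)$.

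Combining the two displays gives $\Pr(S_n \ge u) \le \exp(-\lambda u + n\lambda^2 M^2/2)$ for every $\lambda>0$. I would then optimize over $\lambda$, choosing $\lambda = u/(nM^2)$ to minimize the exponent, which produces the sub-Gaussian tail $\Pr(S_n \ge u) \le \exp(-u^2/(2nM^2))$. Finally, setting the right-hand side equal to $\delta$ and solving for $u$ gives $u = M\sqrt{2n\log(1/\delta)}$; hence with probability at least $1-\delta$ we have $\sum_{i=1}^n x_i \le M\sqrt{2n\log(1/\delta)}$, which is exactly the claimed bound.

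The main obstacle is the conditional Hoeffding step: one must verify the conditioning is set up correctly under the paper's measurability convention ($x_i$ is $\cG_{i+1}$-measurable with $\EE[x_i\mid\cG_i]=0$), so that $S_{k-1}$ is genuinely $\cG_k$-measurable and can be treated as a constant when conditioning on $\cG_k$, and that Hoeffding's lemma applies to the conditional law of $x_k$ given $\cG_k$ uniformly in the conditioning (with the bound $|x_k|\le M$ holding almost surely). Everything else is routine Chernoff optimization.
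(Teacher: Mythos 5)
Your proof is correct: Markov's inequality applied to $e^{\lambda S_n}$, the conditional Hoeffding lemma peeled off term by term along the filtration (using that $S_{k-1}$ is $\cG_k$-measurable under the paper's convention that $x_i$ is $\cG_{i+1}$-measurable), the resulting bound $\EE[e^{\lambda S_n}]\le e^{n\lambda^2M^2/2}$, and the optimization $\lambda=u/(nM^2)$ followed by solving $e^{-u^2/(2nM^2)}=\delta$ yield exactly the claimed bound. The paper gives no proof of this lemma, importing it from \citealt{cesa2006prediction}, and your argument is precisely the standard proof found there, with the conditional-measurability bookkeeping handled correctly.
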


\begin{lemma}[Lemma 11 in \cite{abbasi2011improved}]\label{Lemma:abba}
Let $\{\bphi_t\}_{t=1}^{+\infty}$ be a sequence in $\RR^d$, $\Ub$ a $d \times d$ positive definite matrix and define $\Ub_t=\Ub+\sum_{i=1}^{t} \bphi_i^{\top}\bphi_i$. If $\|\bphi_i\|_2\leq L$ and $\lambda_{\min}(\Ub)\ge \max(1,L^2),$ then we  have
\begin{align}
    \sum_{i=1}^{t} \bphi_i^{\top} (\Ub_{i-1})^{-1} \bphi_i\leq 2 \log \bigg(\frac{\det{\Ub_t}}{\det{\Ub}}\bigg).\notag
\end{align}
\end{lemma}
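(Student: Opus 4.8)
The plan is to follow the classical determinant-telescoping argument underlying the elliptical potential lemma. The workhorse is the rank-one determinant update: writing $\Ub_i = \Ub_{i-1} + \bphi_i\bphi_i^\top$, the matrix determinant lemma $\det(A+uv^\top)=\det(A)(1+v^\top A^{-1}u)$ with $A=\Ub_{i-1}$ and $u=v=\bphi_i$ gives
\begin{align*}
\det(\Ub_i) = \det(\Ub_{i-1})\bigl(1 + \bphi_i^\top \Ub_{i-1}^{-1}\bphi_i\bigr).
\end{align*}
Abbreviating $w_i := \bphi_i^\top \Ub_{i-1}^{-1}\bphi_i$ and taking the product of these identities over $i=1,\dots,t$ makes the determinants telescope (recall $\Ub_0=\Ub$), so after taking logarithms one obtains
\begin{align*}
\log\frac{\det(\Ub_t)}{\det(\Ub)} = \sum_{i=1}^t \log(1 + w_i).
\end{align*}
It therefore suffices to compare $\sum_i w_i$ against $\sum_i \log(1+w_i)$ term by term.

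First I would establish that each $w_i$ lies in $[0,1]$, which is precisely where the two hypotheses $\|\bphi_i\|_2\le L$ and $\lambda_{\min}(\Ub)\ge\max(1,L^2)$ are used. Since $\Ub_{i-1}\succeq\Ub\succeq\max(1,L^2)\,\Ib$, monotonicity of matrix inversion gives $\Ub_{i-1}^{-1}\preceq\max(1,L^2)^{-1}\,\Ib$, and hence
\begin{align*}
w_i = \bphi_i^\top \Ub_{i-1}^{-1}\bphi_i \le \frac{\|\bphi_i\|_2^2}{\max(1,L^2)} \le \frac{L^2}{\max(1,L^2)} \le 1.
\end{align*}
Next I would invoke the elementary scalar inequality $x \le 2\log(1+x)$ valid for $x\in[0,1]$; this follows because $2\log(1+x)-x$ vanishes at $x=0$ and has derivative $(1-x)/(1+x)\ge 0$ on $[0,1]$, so it is nonnegative there. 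Applying this bound to each $w_i\in[0,1]$ and summing yields
\begin{align*}
\sum_{i=1}^t w_i \le 2\sum_{i=1}^t \log(1 + w_i) = 2\log\frac{\det(\Ub_t)}{\det(\Ub)},
\end{align*}
which is exactly the claimed inequality.

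The argument is mostly routine algebra: the determinant update identity and the telescoping of the product are purely formal manipulations. The one step that genuinely uses structure — and the step I would regard as the crux — is keeping each $w_i$ inside the interval $[0,1]$, since the linear-versus-logarithmic comparison $x\le 2\log(1+x)$ fails for large $x$. It is the normalization assumption $\lambda_{\min}(\Ub)\ge\max(1,L^2)$ that guarantees $w_i\le 1$ uniformly over $i$, so I would make sure this hypothesis is flagged as the essential ingredient rather than a cosmetic one.
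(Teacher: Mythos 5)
Your proof is correct and is essentially the canonical argument for this result: the paper does not reprove the lemma but cites it from \cite{abbasi2011improved}, whose Lemma 11 proof proceeds exactly as you do --- rank-one determinant update, telescoping, and the comparison $x \le 2\log(1+x)$ on $[0,1]$, with the hypothesis $\lambda_{\min}(\Ub)\ge\max(1,L^2)$ used precisely to keep each quadratic form $w_i$ below $1$. You also correctly read the definition of $\Ub_t$ as involving the outer products $\bphi_i\bphi_i^{\top}$ (the statement's $\bphi_i^{\top}\bphi_i$ is a typo), and your flagging of the normalization assumption as the essential ingredient is exactly right.
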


\bibliographystyle{icml2025}
\bibliography{example_paper}

\end{document}